\documentclass[twoside]{article}

%\usepackage{aistats2026}
% If your paper is accepted, change the options for the package
% aistats2026 as follows:
%
\usepackage[accepted]{aistats2026}
%
% This option will print headings for the title of your paper and
% headings for the authors names, plus a copyright note at the end of
% the first column of the first page.

% We also include a `preprint' option for non-anonymous preprints. 
% Change the options for the package aistats2026 as follows:
%
%\usepackage[preprint]{aistats2026}
%
% This option will print headings for the title of your paper and
% headings for the authors names, but does not print the copyright and 
% venue note at the end of the first column of the first page.

% If you set papersize explicitly, activate the following three lines:
%\special{papersize = 8.5in, 11in}
%\setlength{\pdfpageheight}{11in}
%\setlength{\pdfpagewidth}{8.5in}

% If you use the natbib package, activate the following three lines:
%\usepackage[round]{natbib}
%\renewcommand{\bibname}{References}
%\renewcommand{\bibsection}{\subsubsection*{\bibname}}

% If you use BibTeX in apalike style, activate the following line:
%\bibliographystyle{apalike}

%\usepackage{hyperref}       % hyperlinks
\usepackage{url}            % simple URL typesetting
\usepackage{booktabs}       % professional-quality tables
\usepackage{amsfonts}       % blackboard math symbols
\usepackage{nicefrac}       % compact symbols for 1/2, etc.
\usepackage{microtype}      % microtypography
\usepackage{xcolor}         % colors
\usepackage{amsmath}
\usepackage{amssymb}
\usepackage{mathtools}
\usepackage{amsthm}
\usepackage{wrapfig}
\usepackage{caption}
\usepackage{comment}
\usepackage{subcaption}  % for subfigures
\DeclareMathOperator*{\argmin}{argmin}
\DeclareMathOperator*{\Clip}{Clip}
\usepackage{mathtools}

%%%%%%%%%%%%%%%%%%%%%%%%%%%%%%%%
% THEOREMS
%%%%%%%%%%%%%%%%%%%%%%%%%%%%%%%%
\theoremstyle{plain}
\newtheorem{theorem}{Theorem}[section]
\newtheorem{proposition}[theorem]{Proposition}
\newtheorem{lemma}[theorem]{Lemma}

\theoremstyle{definition}
\newtheorem{definition}[theorem]{Definition}
\newtheorem{assumption}[theorem]{Assumption}
\theoremstyle{remark}

\usepackage{algorithm}
\usepackage{algorithmic}
\usepackage[colorlinks=true, linkcolor=cyan, citecolor=cyan, urlcolor=cyan]{hyperref}

\usepackage[style=authoryear, maxcitenames=1, mincitenames=1, maxbibnames=99, uniquelist=false, hyperref=true]{biblatex}
\renewcommand*{\nameyeardelim}{\addcomma\space}

\hypersetup{
    colorlinks=true,        % Enable colored links instead of boxed links
    linkcolor=cyan,         % Color of internal links (e.g., table of contents)
    citecolor=cyan,         % Color of citation links
    urlcolor=cyan,          % Color of URLs
    pdfborder={0 0 0},      % Remove boxes around links
    pdfborderstyle={/S/U/W 1} % Underline links with width 1
}

\addbibresource{example_paper.bib}  % Replace with your .bib file
\setlength{\bibitemsep}{0.8\baselineskip}
% Suppress "et al." globally
\DefineBibliographyStrings{english}{
  andothers = {}  % Suppress "et al."
}

% Adjust citation formatting only
\AtEveryCite{%
  \DeclareNameAlias{default}{family}  % Use only the surname in citations
}

% Make both author and year hyperlinks
\DeclareFieldFormat{citehyperlink}{%
  \href{#1}{\textcolor{blue}{#1}}}

% Redefine the cite bibmacro to include both author and year in the hyperlink
\renewbibmacro*{cite}{%
  \ifnameundef{labelname}
    {\usebibmacro{cite:label}%
     \setunit{\nametitledelim}%
     \usebibmacro{cite:labelyear+extrayear}}
    {%
      \printtext[bibhyperref]{%
        \printnames{labelname}%
        \setunit{\nameyeardelim}%
        \printfield{labelyear}%
        \printfield{extrayear}%
      }%
    }%
}

\begin{document}

% If your paper is accepted and the title of your paper is very long,
% the style will print as headings an error message. Use the following
% command to supply a shorter title of your paper so that it can be
% used as headings.
%
%\runningtitle{I use this title instead because the last one was very long}

% If your paper is accepted and the number of authors is large, the
% style will print as headings an error message. Use the following
% command to supply a shorter version of the author names so that
% they can be used as headings (for example, use only the surnames)
%
%\runningauthor{Surname 1, Surname 2, Surname 3, ...., Surname n}

\twocolumn[

\aistatstitle{Adaptive Memory Momentum via a Model-Based Framework for Deep Learning Optimization}

\aistatsauthor{ Kristi Topollai \And Anna Choromanska }

\aistatsaddress{ New York University \And  New York University } ]

\begin{abstract}
The vast majority of modern deep learning models are trained with momentum-based first-order optimizers. The momentum term governs the optimizer's memory by determining how much each past gradient contributes to the current convergence direction. Fundamental momentum methods, such as Nesterov Accelerated Gradient and the Heavy Ball method, as well as more recent optimizers such as AdamW and Lion, all rely on the momentum coefficient that is customarily set to $\beta = 0.9$ and kept constant during model training, a strategy widely used by practitioners, yet suboptimal. In this paper, we introduce an adaptive memory mechanism that replaces constant momentum with a dynamic momentum coefficient that is adjusted online during optimization. We derive our method by approximating the objective function using two planes: one derived from the gradient at the current iterate and the other obtained from the accumulated memory of the past gradients. To the best of our knowledge, such a proximal framework was never used for momentum-based optimization. Our proposed approach is novel, extremely simple to use, and does not rely on extra assumptions or hyperparameter tuning. We implement adaptive memory variants of both SGD and AdamW across a wide range of learning tasks, from simple convex problems to large-scale deep learning scenarios, demonstrating that our approach can outperform standard SGD and Adam with hand-tuned momentum coefficients. Finally, our work opens doors for new ways of inducing adaptivity in optimization.
\end{abstract}

\section{INTRODUCTION}

Stochastic Gradient Descent (SGD)~\parencite{bottou-98x} and its variants~\parencite{sutskever2013importance,kingma2014adam} are widely used for training deep learning models due to their simplicity and efficiency. Many popularly used first-order optimizers rely on Heavy Ball Momentum, commonly defined as:
\begin{align}
    d_{t+1} = \beta d_t + (1 - \beta) \nabla f(x_t), 
    \;\; x_{t+1} = x_t - \eta d_{t+1}, \label{eq:momentum_main}
\end{align}
where \( \eta \) is the learning rate, \( x_t \) denotes model parameters at iteration \( t \), \( f \) is the loss function, and \( \beta \) is the momentum coefficient. Momentum methods augment the current gradient with an exponentially weighted moving average of past gradients, where \( \beta \) determines the optimizer's ``memory'', that is, how much past gradients influence the update direction.

In deterministic settings, momentum methods can provably accelerate convergence under mild assumptions~\parencite{polyak1964some, nesterov1983method}. Achieving such acceleration in practice with Heavy Ball (HB) momentum~\parencite{polyak1964some} or Nesterov Accelerated Gradient (NAG) ~\parencite{nesterov1983method} requires carefully tuning \( \eta \) and \( \beta \), or using time-dependent schedules~\parencite{nesterov1983method}. However, in non-convex or stochastic settings, these accelerated rates are not guaranteed. Surprisingly, despite the widespread adoption of momentum methods in deep learning due to their empirical effectiveness, theoretical analyses offer no justification of why they show empirical gains: under similar assumptions, stochastic Heavy Ball (HB) momentum achieves at best the same convergence rate as plain SGD, but no better. This disconnect between theory and practice is striking, especially given the near-universal use of a fixed momentum coefficient, typically \( \beta = 0.9 \), across models, datasets, and optimization setups.

But is this fixed choice really optimal? Intuitively, it seems unlikely that a single value of \( \beta \) should work equally well throughout the whole training process and across different data sets and models. Instead, we ask: can momentum adapt over time to better match the optimization landscape? In this work, we propose a simple yet principled answer, a time-varying momentum coefficient that evolves with the optimization process. We refer to this mechanism as \textit{adaptive memory}, as it dynamically adjusts the extent to which the optimizer relies on past gradients at each step.

To derive this adaptive coefficient, we take inspiration from model-based optimization techniques~\parencite{asi2019stochastic,davis2019stochastic} and the proximal bundle method~\parencite{kiwiel2006methods}. In this framework, the objective function \( f(x) \) is approximated by a surrogate model \( f^m_t(x) \), and at each step the following problem has to be solved:
\begin{equation}
\label{model_based1}
    x_{t+1} \in \argmin_{x} f^m_t(x) + \frac{1}{2\eta} \|x - x_t\|^{2}.
\end{equation}
We extend this framework by constructing \( f^m_t(x) \) from two planes: one from the current gradient \( \nabla f(x_t) \) and one from the previous descent direction $\frac{1}{\eta}\left(x_{t-1} - x_t\right)$ which encodes the accumulated momentum. Under this model, the proximal step in Equation~\eqref{model_based1} yields the update:
\begin{equation}
    x_{t+1} = x_t - (1 - \beta_t)\eta \nabla f(x_t) +   \beta_t \left(x_{t} - x_{t-1}\right) ,
\end{equation}
where \( \beta_t \) is an \textit{adaptive} momentum coefficient computed in closed form from the model. This gives rise to our proposed \textit{adaptive memory momentum} schemes. \textbf{Our contributions are summarized as follows:}

 \textbf{i) Motivation.} We begin with a simple yet revealing empirical demonstration: using a fixed momentum coefficient can lead to suboptimal convergence, even when finely tuned. Identifying the optimal fixed value of \( \beta \) is often tedious, and more importantly, a static choice fails to adapt to the dynamics of the training process. This observation motivates the need for a time-adaptive momentum scheme.
 
\textbf{ii) Methodology.} We propose a novel approximate model of the loss function that combines two planes, one based on the current gradient and the other on the accumulated momentum. This model leads naturally to a reinterpretation of the proximal model-based update rule as an \textit{adaptive memory} momentum scheme, in which the momentum coefficient evolves throughout optimization. We then incorporate our adaptive memory mechanism into both SGD and AdamW, yielding simple, yet effective, new variants of these optimizers. In each case, the momentum coefficient is updated dynamically based on our model-based formulation.

\textbf{iii) Experimental Validation.} We evaluate our approach across a wide range of settings, from deterministic convex problems to large-scale deep learning tasks. Beyond consistently outperforming fixed-momentum baselines, our adaptive memory methods offer significant benefits in challenging training regimes, particularly at high learning rates (Figure~\ref{fig:ablations_twocol}) and in the early stages of optimization. An important implication of adaptive memory is that it can offer an alternative to learning rate warm-up or scheduling (Figure~\ref{llama}), providing a robust and tuning-free alternative that could simplify the training pipeline of large models.
    
\section{RELATED WORK}
\label{sec:RW}

\subsection{Momentum Methods}

Momentum was first introduced by Polyak~\parencite{polyak1964some} in the form of the Heavy Ball (HB) method, which incorporates the previous iterate into the update:
\begin{equation}
    x_{t+1} = x_t - \eta \nabla f(x_t) + \beta(x_t - x_{t-1}),
\end{equation}
where \( \beta \in [0,1) \) is the momentum coefficient. For \( L \)-smooth and \( m \)-strongly convex quadratic functions, tuned momentum yields accelerated convergence over gradient descent. Nesterov's Accelerated Gradient (NAG)~\parencite{nesterov1983method} achieves similar benefits, and attains the optimal \( O(1/t^2) \) rate for \( L \)-smooth convex functions.

Momentum has since been adapted to stochastic settings~\parencite{tseng1998incremental, ruszczynski1987linearization}, where it provably matches the convergence rate of SGD~\parencite{yan2018unified, sebbouh2021almost}. However, the theoretical justification for its often superior empirical performance under common settings remains incomplete. Despite this gap, HB-style momentum has become a standard component in deep learning optimizers, offering faster convergence and often better generalization~\parencite{sutskever2013importance}. The practical version used in most frameworks is \footnote{PyTorch uses an equivalent momentum update with \( d_{t+1} = \beta d_t + g_t \) and appropriate learning rate scaling.}:
\begin{align}
\label{mom_for_theorem}
    d_{t+1} = \beta d_t + (1 - \beta) g_t, 
    \qquad x_{t+1} = x_t - \eta d_{t+1},
\end{align}
where \( g_t \) is a stochastic gradient such that \( \mathbb{E}[g_t] = \nabla f(x_t) \). The success of momentum has motivated extensive theoretical analysis~\parencite{jelassi2022towards, mai2020convergence, hu2009accelerated, gitman2019understanding, liu2020improved} and is used in numerous optimization algorithms. These include Adam(W)~\parencite{kingma2014adam,Loshchilov2017DecoupledWD} and its variants~\parencite{you2019large, shazeer2018adafactor, pagliardini2024ademamix}, as well as other momentum-based methods like Lion\parencite{chen2024symbolic}, Sophia~\parencite{DBLP:conf/iclr/Liu0HL024}, SOAP~\parencite{vyas2024soap}, MARS~\parencite{yuan2024mars}, and Muon~\parencite{ jordan2024muon, liu2025muon}. Although different, all of these methods rely on a fixed momentum coefficient \( \beta \) and do not adapt it during training.

\subsection{Model-Based Optimization}

We build on the framework of proximal point methods~\parencite{rockafellar1976monotone}, which update iterates via the objective:
\begin{equation}
\label{model_based}
    x_{t+1} \in \argmin_{x} f(x) + \frac{1}{2\eta}\|x - x_t\|^2_2.
\end{equation}
Since exactly solving this is often intractable, it is common to replace \( f(x) \) with a simpler surrogate \( f^m_t(x) \)~\parencite{asi2020minibatch}, which yields a general model-based optimization framework used in both deterministic and stochastic settings~\parencite{davis2019stochastic, asi2019stochastic, chadha2022accelerated}.
Gradient Descent, SGD~\parencite{robbins1951stochastic}, and Subgradient Descent~\parencite{polyak1987introduction} can all be recovered by linearizing \( f \) around \( x_t \) and substituting appropriate gradient or subgradient terms. Second-order updates such as Newton's method arise from quadratic approximations. More sophisticated models, such as cutting plane bundles~\parencite{kelley1960cutting}(Equation~\ref{cutting_plane}), lead to the Proximal Bundle Method~\parencite{kiwiel1983aggregate}, useful in non-smooth optimization:
\begin{equation}
    \label{cutting_plane}
    f^m_t(x) = \max_{i=1,\dots,T} \big(f(x_i) + \langle g_i, x - x_i \rangle \big),
\end{equation}
where $g_i$ is a subgradient that defines a cutting plane at $x_i$. Model-based approaches have also been used to derive adaptive learning rates, such as the Polyak step size~\parencite{polyak1987introduction,loizou2021stochastic}, and more recently, adaptive learning rates for momentum gradient descent~\parencite{schaipp2023momo, wang2023generalized, oikonomou2024stochastic}. Inspired by these ideas, we instead use a model-based approximation of the loss to derive adaptive momentum coefficients.

\subsection{Adaptive Methods}

In both stochastic and non-stochastic optimization, choosing effective values for the learning rate and momentum coefficient is critical. In smooth and (strongly) convex settings, their optimal values depend on the condition number of the function. Consequently, a significant line of research focuses on adaptively estimating the strong convexity and smoothness constants to tune these parameters for various accelerated methods~\parencite{malitsky2019adaptive,barre2020complexity,saab2022adaptive}.

In stochastic optimization, most work focuses on adaptive learning rates. Methods such as stochastic Polyak step sizes~\parencite{loizou2021stochastic,orvieto2022dynamics} adapt the step size based on the suboptimality gap \( f(x_t) - f^* \), while others rely on distance-to-optimum estimates~\parencite{defazio2023learning,ivgi2023dog}. In deep learning, adaptive diagonal pre-conditioners, used in AdaGrad~\parencite{duchi2011adaptive}, Adam~\parencite{kingma2014adam}, and related methods~\parencite{hinton2012neural,DBLP:conf/iclr/Liu0HL024}, are standard for per-parameter step size adjustment. Beyond adaptive schemes, fixed learning rate schedules~\parencite{smith2017cyclical,DBLP:conf/iclr/LoshchilovH17} and warm-up strategies~\parencite{goyal2017accurate} are widely used, particularly in large-scale deep learning. Warm-up, which gradually increases the learning rate at the start of training, is now common in training LLMs \parencite{wortsman2023small}. However, it is not truly adaptive and requires manual tuning of hyperparameters such as warm-up duration, making it brittle in long or infinite horizon training regimes where the total number of steps may be unknown. Consequently, there is a growing interest in methods that alleviate these shortcomings~\parencite{kalra2024warmup,kosson2024analyzing}.

In contrast, adapting the momentum coefficient has received little attention. Heuristic schedules~\parencite{sutskever2013importance,9746839} increase or decrease momentum over time, but have not been adopted in practice. Restart-based methods~\parencite{giselsson2014monotonicity,o2015adaptive} propose clearing the momentum buffer when certain technical conditions are met 
\parencite{o2015adaptive} or according to pre-defined schedules~\parencite{wang2022scheduled} and, while effective, they often require handcrafted restart rules. Conjugate gradient (CG) and nonlinear CG~\parencite{hager2006survey} also adapt the weight on past directions and can be viewed as momentum methods with dynamic coefficients. However, they rely on line search to select learning rates. In this work, we focus solely on adapting the momentum coefficient \( \beta \), without also requiring learning rate adaptation.

\section{METHOD}
\label{sec:Method}

\subsection{A Simple Motivation}

We begin by analyzing the deterministic setting and, before introducing our proposed method, we present a motivational example to highlight two key limitations of using a fixed, constant momentum coefficient. In Figure~\ref{fig:quadratic}, we consider an unconstrained quadratic optimization problem, $\min\{x^T Ax +b^T x + c\}$, and plot the objective gap $f(x_{t})-f^{*}$, where $f^*$ is the minimum, for various fixed momentum coefficients. The results reveal that a fixed momentum is inherently suboptimal, consistently being outperformed by an adaptive strategy. Furthermore, optimization around the optimal value of the momentum coefficient, $\beta^{*}$, is highly unstable, i.e., a slightly different coefficient than the optimal radically degrades the performance.

\begin{figure}[h]
    \centering
    \includegraphics[width=0.75\linewidth]{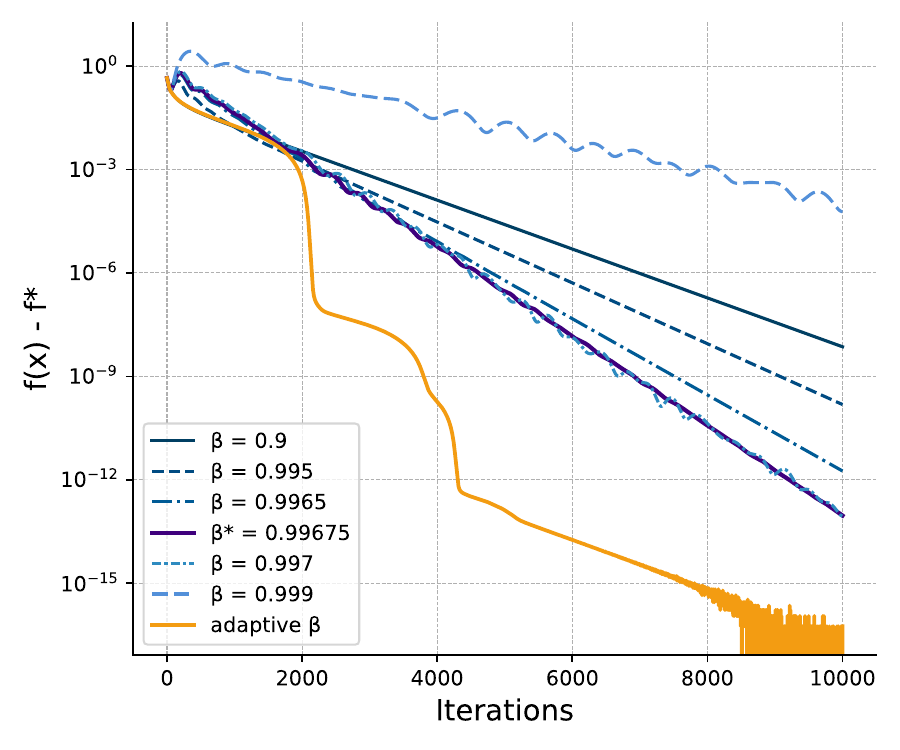}
    \vspace{-1em}
    \caption{The Heavy-ball method with a fixed $\beta$ vs our scheme. We plot the error $f(x_t)-f^*$ over  $t$.}
    \label{fig:quadratic}
    \vspace{-1em}
\end{figure}

\subsection{Approximate Cutting Planes Framework}
Inspired by bundle methods~\parencite{kiwiel2006methods}, our approach for deriving an adaptive coefficient $\beta$ for momentum methods is based on the following model of the function \( f \):
\begin{align}
    f^m_t(x) = \max \Big\{ 
    f\left(x_t\right) &+ g_t^{\top}\left(x - x_t\right),\\ 
    &\hat{f}\left(x_t\right) + \frac{1}{\eta}\left(x_{t-1} - x_t\right)^{\top}\left(x - x_t\right) 
    \Big\}\notag,
\end{align}
where $\frac{1}{\eta}\left(x_{t-1} - x_t\right)$ is the previous descent direction, the momentum term, \( g_t = \nabla f(x_t)\), and $\hat{f}\left(x_t\right)$ is the momentum plane bias, which we discuss further in the end of this section. This formulation uses the cutting plane at the current point, defined by the first-order approximation of \( f \) at \( x_t \), to construct a model of the function. To refine this model, we propose incorporating an additional plane with a slope determined by the direction $\frac{1}{\eta}\left(x_{t-1} - x_t\right)$.

Furthermore, inspired by \parencite{sppam, smod}, we introduce an extra regularization term and regularization parameter $\lambda$ to control the extent to which the descent direction aligns with the previous descent direction $\frac{1}{\eta}\left(x_{t-1} - x_t\right)$. Adding this regularization term to the proximal model-based objective leads to the following update at each step:
\begin{align}
\label{model_based2}
    x_{t+1} \in \argmin_{x} f^m_t(x) &+ \frac{\lambda +1}{2\eta} \|x - x_t\|^2_2 \\&+ \frac{\lambda}{\eta}\langle x_{t-1} - x_t, x-x_{t}\rangle\notag.
\end{align}
The piecewise nature of the truncated model in \( f^m_t(x) \) has a key property, when used in Equation~\eqref{model_based2}, as detailed in the Supplement Section~\ref{sec:derivation} along with all the derivations, the minimizer can be expressed in the following heavy-ball update:
\begin{equation}
x_{t+1} = x_t - \frac{\eta}{\lambda+1} \left(
 a_1 \nabla f(x_t) + (a_2+\lambda)\tfrac{x_{t-1}-x_t}{\eta}
\right).
\end{equation}
where \( a_1, a_2 \in [0,1] \), with \(a_1+a_2=1\), are the dual variables associated with Equation~\eqref{model_based2}. By setting \( a_2 = \beta \) and \( a_1 = 1-\beta \), and  defining the velocity vector $d_t = \frac{1}{\eta}\left(x_{t-1} - x_t\right)$, these variables are determined by solving the quadratic program:
\begin{align}
    \underset{0 \leq \beta \leq 1}{\operatorname{max}} \Bigg\{ 
    -\frac{\eta}{2(\lambda+1)} \Big\| (1 &- \beta)g_t + \beta d_t + \lambda d_t \Big\|_2^2\\
    &+ (1 - \beta) f(x_t) 
    + \beta \hat{f}(x_t) \Bigg\}\notag .
\end{align}
The solution to which, is given by:
\begin{equation}
\beta_t^* = \Clip_{[0,1]}\Bigg(\frac{\frac{(\hat{f}(x_t)- f(x_t))(\lambda+1)}{\eta}  - \left\langle d_t-g_t , g_t+\lambda d_t\right\rangle}{\left\|d_t-g_t\right\|^2_{2}}\Bigg).
\end{equation}
Similar to aggregation in bundle methods, the computed \(\beta_t^*\) is also used to construct the approximation plane for the next iteration. This approach allows us to view the proximal step in Equation~\eqref{model_based2} as a momentum method with an adaptive momentum coefficient:
\begin{align}
    \label{adaptivememory}
    d_{t+1} = \frac{\beta_t^*+\lambda}{1+\lambda} d_t + \frac{1-\beta_t^*}{1+\lambda} g_t, \qquad
    x_{t+1} = x_t - \eta d_{t+1}.
\end{align}
The scaling factor \(\lambda+1\) is introduced in order to define the momentum vector as a convex combination of the gradient and the current momentum. This ensures consistency with the current definition of the momentum gradient descent of Equation~\eqref{eq:momentum_main}. The resulting algorithm is captured in Algorithm~\ref{alg:unified}.

\begin{figure}[h]
    \centering
    \includegraphics[width=\linewidth]{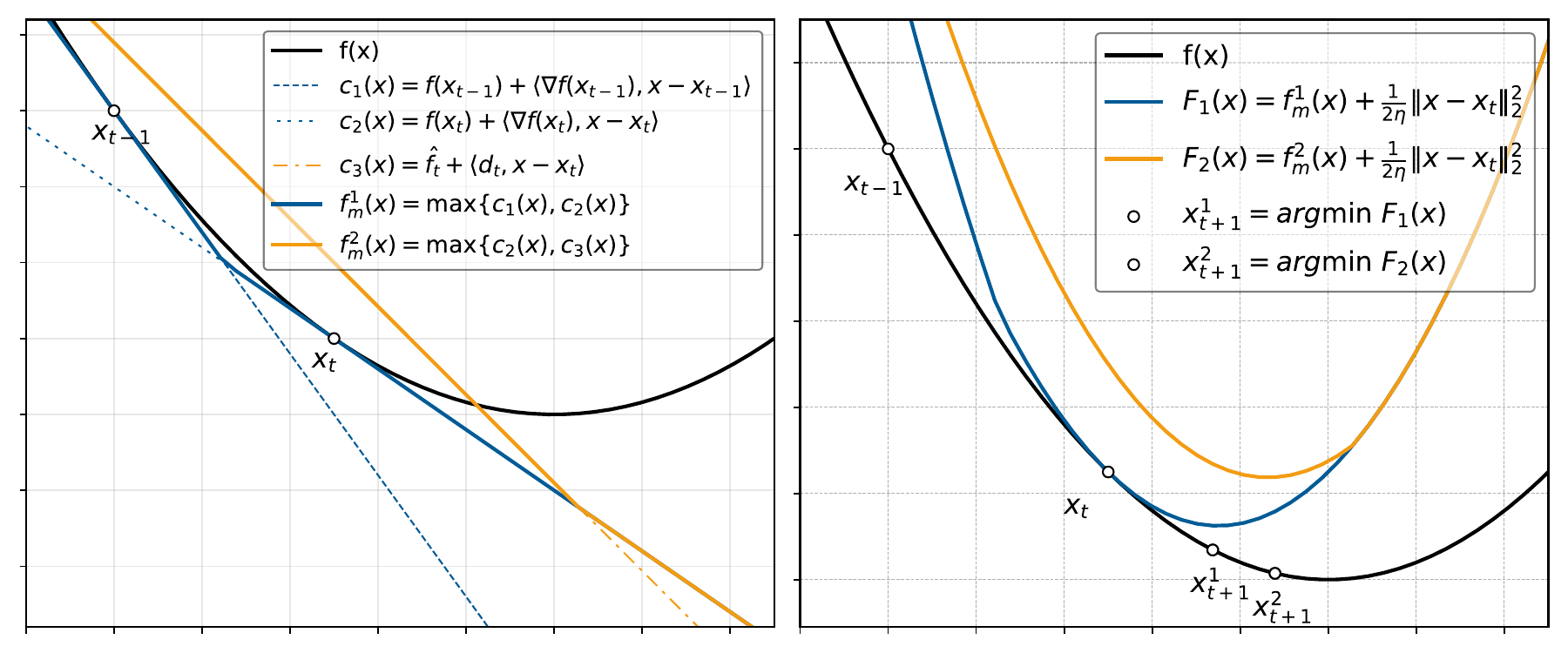}
    \caption{On the left, blue represents a typical cutting planes model, while yellow depicts our model, which may overestimate the function at 
$x_t$. On the right, we observe that with our model, the solution to Equation~\ref{model_based} lies closer to $f^{*}$, thus overestimation can get closer to the optimum.}
    \label{fig:model_example}
    \vspace{-0.1in}
\end{figure}

Regarding the definition of the momentum plane in the model \( f^m_t \), aggregation in proximal bundle methods typically sets \( \hat{f}(x_{t+1}) = f^m_t(x_{t+1}) \leq f(x_t) \), for the next iteration, leveraging the property that \( d_{t+1} \in \partial f^m_t(x_{t+1}) \) \parencite{kiwiel1983aggregate}. Although underestimating the true value of the objective ensures convergence and is used extensively in the literature on bundle methods, our experiments reveal that relying on overestimation and using \( \hat{f}(x_t) = f(x_{t-1}) \) instead (note that it is possible to have \( \hat{f}(x_t) > f(x_t)\)) results in faster convergence. An intuitive illustration of this mechanism is provided in Figure~\ref{fig:model_example}. This is a practical and efficient choice, shown in our experiments to yield strong empirical performance without added overhead. To support our argument, a detailed discussion is provided in the Supplement Section~\ref{sec:overestimating}, where we prove that for quadratics, minimizing \(\|x_{t+1} - x^*\|_2^2\) under Equation~\ref{mom_for_theorem} with respect to $\beta$ implies \(\hat{f}(x_t) > f(x_t)\) for all \(t\).

\subsection{Pre-conditioning and Decoupled Weight decay}
The flexibility of the model-based approach allows seamless extension to state-of-the-art optimizers. Algorithms like Adam~\parencite{kingma2014adam} and AdaGrad~\parencite{duchi2011adaptive} use diagonal pre-conditioners to scale updates. To incorporate these into our Adaptive Memory framework, we replace the Euclidean metric with one induced by the pre-conditioner \( P_t \), where \( \langle x, y \rangle_{P_t} = x^\top P_t y \) for symmetric positive definite \( P_t \). Weight decay can be accounted for by modeling the regularized function \( f_{\mu\|\|}(x) = f(x) + \frac{\mu}{2} \|x\|_2^2 \). However, optimizers like AdamW~\parencite{Loshchilov2017DecoupledWD} implement decoupled weight decay, which is handled separately. We can incorporate a preconditioner and decoupled weight decay by solving at each step:
\begin{align}
\label{model_based4}
    x_{t+1} \in \argmin_{x} f^m_t(x) &+ \frac{1}{2\eta} \|x - x_t\|^2_{\tilde{P}_t}\\&+ \lambda \langle d_t, x - x_t \rangle_{P_t} + \frac{\mu}{2} \|x\|^2_{\tilde{P}_t}\notag,
\end{align}
where \( \tilde{P}_t = (I + \lambda P_t)P_t \) is introduced to appropriately scale the momentum term. Solving this optimization problem yields the following update rules:
\begin{align}
    d_{t+1} &= (\lambda P_t + I)^{-1} \big((\beta_t^* I + \lambda P_t) d_t + (1 - \beta_t^*) g_t \big),\notag\\
    x_{t+1} &= \frac{1}{1 + \mu \eta} \big(x_t - \eta P_t^{-1} d_{t+1} \big).\notag\\
    \beta_t^* &= \Clip_{[0, 1]} \Bigg(\frac{\mu \langle x_t, g_t - d_t \rangle}{\|d_t - g_t\|^2_{\tilde{P}_t^{-1}}}\\&+ \frac{\frac{(1 + \mu \eta)(\hat{f}(x_t) - f(x_t))}{\eta} - \left\langle d_t - g_t, g_t + \lambda P_t d_t \right\rangle_{\tilde{P}_t^{-1}}}{\|d_t - g_t\|^2_{\tilde{P}_t^{-1}}} \Bigg)\notag .
    \label{final_beta}
\end{align}
By using the Adam pre-conditioner: $P_t = (1 - \beta_1^t) \text{diag} \big( \epsilon + \sqrt{\frac{v_t}{1 - \beta_2^t}} \big)$ where $v_t = \beta_2 v_{t-1} + (1 - \beta_2) (g_t \odot g_t)$, we derive the AM-AdamW \footnote{for small \( \eta \), \( \frac{1}{1 + \eta \mu} \approx 1 - \eta \mu \) and \( \frac{\eta}{1 + \eta \mu} \approx \eta \)~\parencite{zhuang2022understanding}} optimizer.
This method retains the benefits of AdamW while incorporating momentum adaptivity through the AM framework.

\subsection{Adaptive Memory for Stochastic Gradient Descent}
We found that the most variance-sensitive component in the adaptive momentum formula is the difference \( \Delta f = f(x_{t-1}, s_t) - f(x_t, s_t) \), where \(f(x_t, s_t)\) are stochastic function evaluations using a data batch \( s_t \) sampled at time \( t \). Evaluating the loss at two different points on the same batch is inefficient, while using different batches
 introduces significant noise and instability into the estimate. To address this, we apply two practical modifications detailed in Algorithm~\ref{alg:unified}:
\begin{itemize}
\vspace{-1em}
    \item Clip \( \beta \) to the interval \( [0, \beta_{\max}] \), with \( \beta_{\max} < 1 \) (0.9 in experiments of Section~\ref{experiments}).
    \vspace{-0.1in}
    \item Replace \( f(x_{t-1}, s_t) - f(x_t, s_t) \) with a first-order approximation around \( x_t \): 
    \vspace{-0.1in}
    \[
        \Delta f \approx \nabla f(x_t, s_t)^\top (x_{t-1} - x_t) = \eta g_t^\top d_t,
    \]
\end{itemize}
\vspace{-1em}
We apply similar adjustments to the AdamW setting. While our notation assumes a fixed learning rate \( \eta \), our framework also supports dynamic learning rates. Full algorithmic and implementation details for all Adaptive Memory variants are provided in the Supplement Section~\ref{sec:algo}.

\newcommand{\MGD}[1]{\textcolor{cyan}{#1}}
\newcommand{\MSGD}[1]{\textcolor{orange}{#1}}

\begin{algorithm}[t]
\small
\caption{Adaptive Memory–Momentum. 
\MGD{AM-MGD}, \MSGD{AM-MSGD}}
\label{alg:unified}
\begin{algorithmic}[1]
\STATE \textbf{Initialize:} $\eta, \lambda, x_0, $
  \MGD{$ d_0 = \nabla f(x_0)$}, \;
  \MSGD{$\beta_{\max}, d_0 = g_0$}
\FOR{each iteration $t=1,2,\dots,T$}
  \STATE $g_t =$
    \MGD{$\nabla f(x_t)$}, \;
    \MSGD{$\nabla f(x_t, s_t)$}
  \STATE  
    \MGD{$\beta_t =\displaystyle \frac{(\hat{f}(x_t) - f(x_t))(\lambda + 1) - \eta\langle d_t - g_t,\; g_t + \lambda d_t \rangle}{\eta\|d_t - g_t\|_2^2}$}
  \STATE 
    \MSGD{$\beta_t =\displaystyle \frac{(\lambda + 1) g_t^\top d_t - \langle d_t - g_t,\; g_t + \lambda d_t \rangle}{\|d_t - g_t\|_2^2}$}
  \STATE $\beta_t =$
    \MGD{$\min(\max(\beta_t, 0), 1)$}, \;
    \MSGD{$\min(\max(\beta_t, 0), \beta_{\max,t})$}
  \STATE $d_{t+1} = \frac{\beta_t + \lambda}{1 + \lambda} d_t + \frac{1-\beta_t}{1 + \lambda} g_t$
  \STATE $x_{t+1} = x_t - \eta d_{t+1}$
\ENDFOR

\end{algorithmic}
\end{algorithm}

\subsection{Convergence Guarantees}
\label{theory}

We provide convergence guarantees for our proposed method in the standard finite-sum setting:
\[
\min_{x \in \mathbb{R}^d} f(x)
=
\frac{1}{n}\sum_{i=1}^n f_{S_i}(x),
\]
where each \(f_{S_i}\) denotes the loss associated with a mini-batch \(S_i\), and the goal is to minimize the average loss \(f\). At iteration \(t\), let $g_t := \nabla f_{S_t}(x_t)$,
where \(S_t\) denotes the sampled mini-batch. We proceed under the following assumptions.

\begin{assumption}[Smoothness]
\label{Smoothness}
The function \(f\) is \(L\)-smooth if for all \(x,y \in \mathbb{R}^d\),
\begin{equation}
f(y) \leq f(x) + \nabla f(x)^\top (y-x) + \frac{L}{2}\|y-x\|^2.
\end{equation}
\end{assumption}

\begin{assumption}[Bounded stochastic gradient norm]
\label{growth}
There exists \(G>0\) such that, for all \(x\),
\begin{equation}
\mathbb{E}_{S}\!\left[\|\nabla f_{S}(x)\|^2\right] \le G^2.
\end{equation}
\end{assumption}

We additionally assume that the stochastic gradient is unbiased:
\[
\mathbb{E}[g_t \mid x_t] = \nabla f(x_t).
\]

\vspace{-0.5em}
For our Adaptive Memory Momentum SGD, presented in Algorithm~\ref{alg:unified}, we establish convergence guarantees under standard assumptions. The analysis leverages the structural bound
\[
\beta_t \,\|d_t-g_t\|_2^2 \le \|g_t\|_2^2,
\]
which follows directly from the construction of the adaptive momentum coefficient \(\beta_t\).

\begin{theorem}[Convex Convergence]
\label{thm:convex}
Assume each \(f_{S_i}\) is convex, Assumption~\ref{growth} holds, and
\(\mathbb{E}[g_t\mid x_t]=\nabla f(x_t)\). Let \(x^*\in\arg\min_x f(x)\), and run the method with \(\eta=1/\sqrt{T}\) and \(\beta_{\max}=1/T\). Define
\[
x_T^{(a)}
=
\frac{\sum_{t=0}^{T-1} a_t x_t}{\sum_{t=0}^{T-1} a_t},
\qquad
a_t=\left(1+\frac{1}{T}\right)^{-(t+1)}.
\]
Then
\begin{equation}
\mathbb{E}\bigl[f(x_T^{(a)})-f(x^*)\bigr]
\le
\frac{1}{\sqrt T}
\left(
\|x_0-x^*\|^2
+
\frac{5}{2}G^2
\right).
\end{equation}
\end{theorem}

\begin{theorem}[Non-convex Convergence]
\label{thm:nonconvex-convergence}
Assume \(f\) satisfies Assumption~\ref{Smoothness}, Assumption~\ref{growth} holds, and \(\mathbb{E}[g_t\mid x_t]=\nabla f(x_t)\). Run the method with \(\eta=1/\sqrt{T}\) and \(\beta_{\max}=c\eta\) for some \(0<c<1\). Then
\begin{equation}
\min_{0\le t\le T-1}\mathbb{E}[\|\nabla f(x_t)\|_2^2]
\le
\frac{\mathbb{E}[f(x_0)-f(x_T)] + \tfrac{1+8L}{4}G^2}
{\sqrt{T}(1-c)}.
\end{equation}
\end{theorem}

Our convergence guarantees place AM within the standard landscape of stochastic first-order methods. Under smoothness and bounded stochastic gradient norms, we obtain the classical $O(1/\sqrt{T})$ rate, consistent with existing analyses of SGD and heavy-ball under comparable assumptions~\parencite{yan2018unified,sebbouh2021almost}. The main technical obstacle is that $\beta_t$ is itself a nonlinear function of the current stochastic gradient $g_t$, so the proof must control terms such as $-2\eta \beta_t \langle x_t-x^*,\, d_t-g_t\rangle$, and $\beta_t$ cannot be treated as an external parameter inside expectations.

This is closely related to the difficulty that appears in analyses of adaptive step sizes, where the step size is also chosen from the same stochastic information used in the update, breaking the usual martingale structure unless additional assumptions are imposed~\parencite{loizou2020momentum,orvieto2022dynamics}. In our setting, the key structural relation is $\beta_t\|d_t-g_t\|^2 \le \|g_t\|^2$, which follows directly from the definition of $\beta_t$ and keeps the adaptive correction term controlled. At the same time, as in fixed-momentum and adaptive-step-size methods, the cleanest stochastic guarantees are often obtained when momentum is damped or effectively small~\parencite{wang2023generalized,oikonomou2024stochastic,schaipp2023momo,loizou2020momentum}. Our restriction on $\beta_{\max}$ should be viewed in this same light.

\begin{figure*}[t]
    \centering
    % First figure
    \begin{subfigure}{\linewidth}
        \centering
        \includegraphics[width=\linewidth]{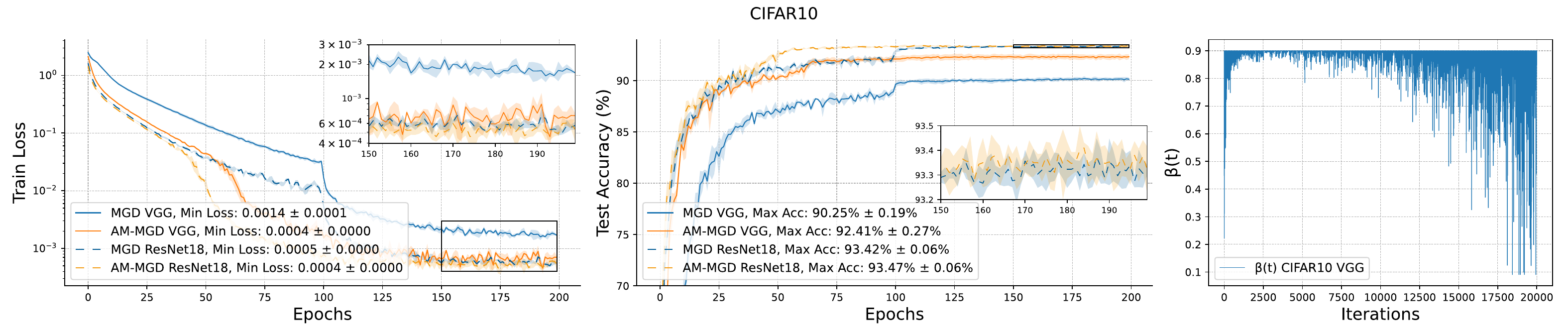}
    \end{subfigure}
    
    \vspace{-.5em} % vertical space between them
    
    % Second figure
    \begin{subfigure}{\linewidth}
        \centering
        \includegraphics[width=\linewidth]{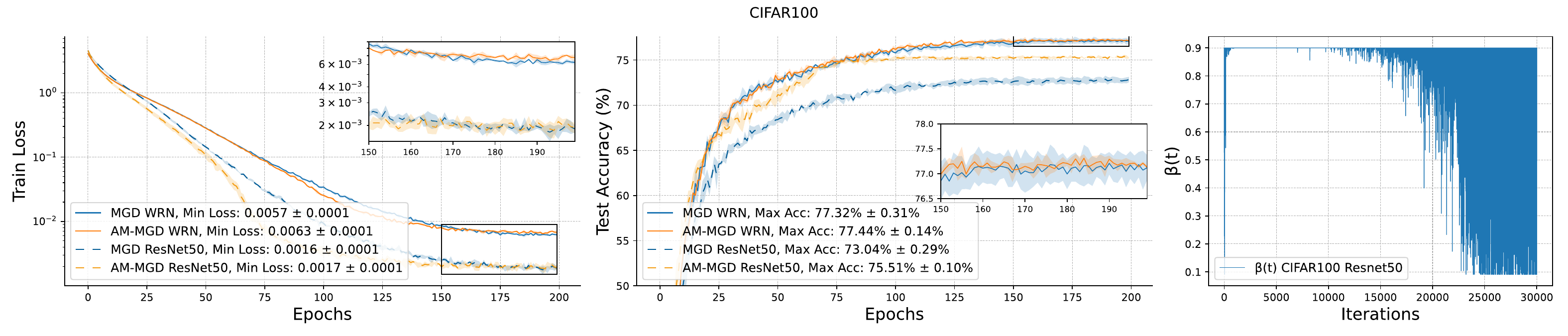}
    \end{subfigure}
    \caption{Train Loss, Test Accuracy and momentum parameter on the image classification experiments, (a) VGG19 and a ResNet18 on CIFAR10 (top), (b) ResNet50 and WideResNet on CIFAR100 (bottom) }
    \label{fig:classification_experiments}
\end{figure*}

\section{EXPERIMENTS}
\label{experiments}
\label{sec:Exp}

\subsection{Convex Problems}
For our convex experiments, we used logistic regression on 9 datasets from the LIBSVM repository~\parencite{CC01a} (4 shown here; full results in the Supplement Section~\ref{sec:conv}). Figure~\ref{fig:logreg} shows that AM-MGD consistently outperforms constant momentum across all datasets. We compare with the commonly used $\beta = 0.9$ and the optimal $\beta^*$, found via grid search to minimize the final loss. Notably, $\beta^*$ can exhibit instability or nonmonotonicity, whereas AM-MGD achieves lower loss without tuning.
\begin{figure}[H]
\centering

% --- Top row: first 2 panels ---
\begin{subfigure}{\linewidth}
    \centering
    \includegraphics[
        width=\linewidth,
        clip,
        trim=0cm 0cm 22.5cm 0cm
    ]{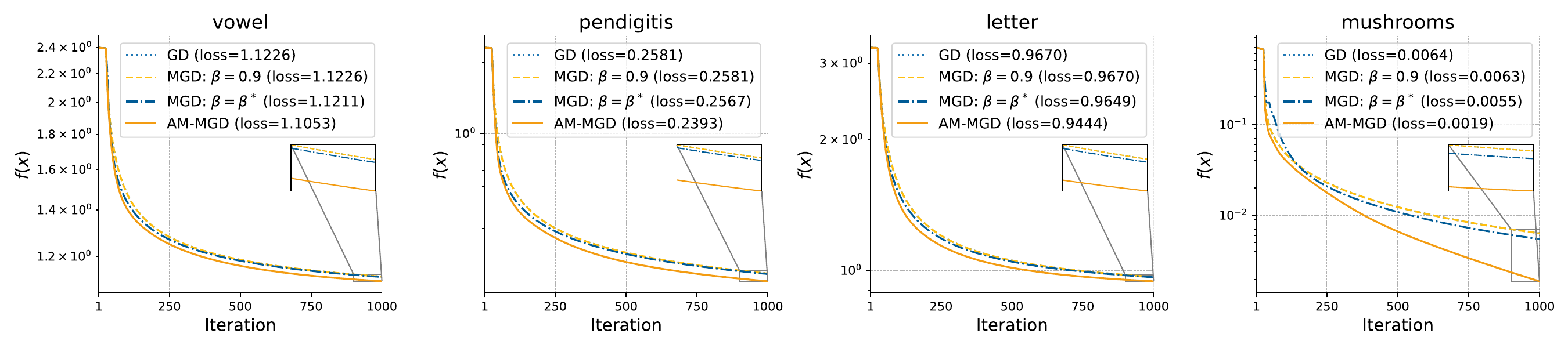}
\end{subfigure}

\begin{subfigure}{\linewidth}
    \centering
    \includegraphics[
        width=\linewidth,
        clip,
        trim=22.8cm 0cm 0cm 0cm
    ]{fixed_figs/logreg_2_zoomed.pdf}
\end{subfigure}

\caption{Logistic Loss over time, AM-MGD (red curve) outperforms fixed momentum on all 4 experiments. GD and MGD with $\beta=0.9$ practically overlap. The $\lambda$ 
was fixed to 0 in all runs.}
\label{fig:logreg}
\end{figure}

\subsection{Image Classification}
\label{imageclassification}

We evaluate AM-MGD on standard image classification benchmarks, including CIFAR-10, CIFAR-100~\parencite{alex2009learning}, and ImageNet~\parencite{russakovsky2015imagenet}, using VGG~\parencite{Simonyan2014VeryDC}, ResNets~\parencite{he2016deep}, and Wide-ResNets~\parencite{zagoruyko2016wide}. For these experiments, we directly adopt the training configuration of MoMo~\parencite{schaipp2023momo}, which likewise considers damped-momentum SGD, and plug our adaptive-momentum update into this setup without performing a separate hyperparameter search for AM-MGD. Thus, all reported improvements are obtained under hyperparameters tuned for the vanilla fixed-momentum baseline, not for our method. Throughout, we keep the AM-specific hyperparameters fixed at \(\beta_{\max}=0.9\) and \(\lambda=0.1\); we do not retune them across datasets or architectures. For MGD, we set PyTorch's dampening parameter to \(0.9\) to ensure a fair comparison at identical learning rates. Finally, the learning-rate ablations in Figure~\ref{fig:ablations_twocol} and in the Supplement Section~\ref{sec:lr_ablation}, show that the relative gains of AM-MGD persist across a broad range of step sizes. Full experimental details are provided in the Supplement Section~\ref{sec:setup_experiment}.

\begin{figure*}[t]
\centering
\includegraphics[width=\textwidth]{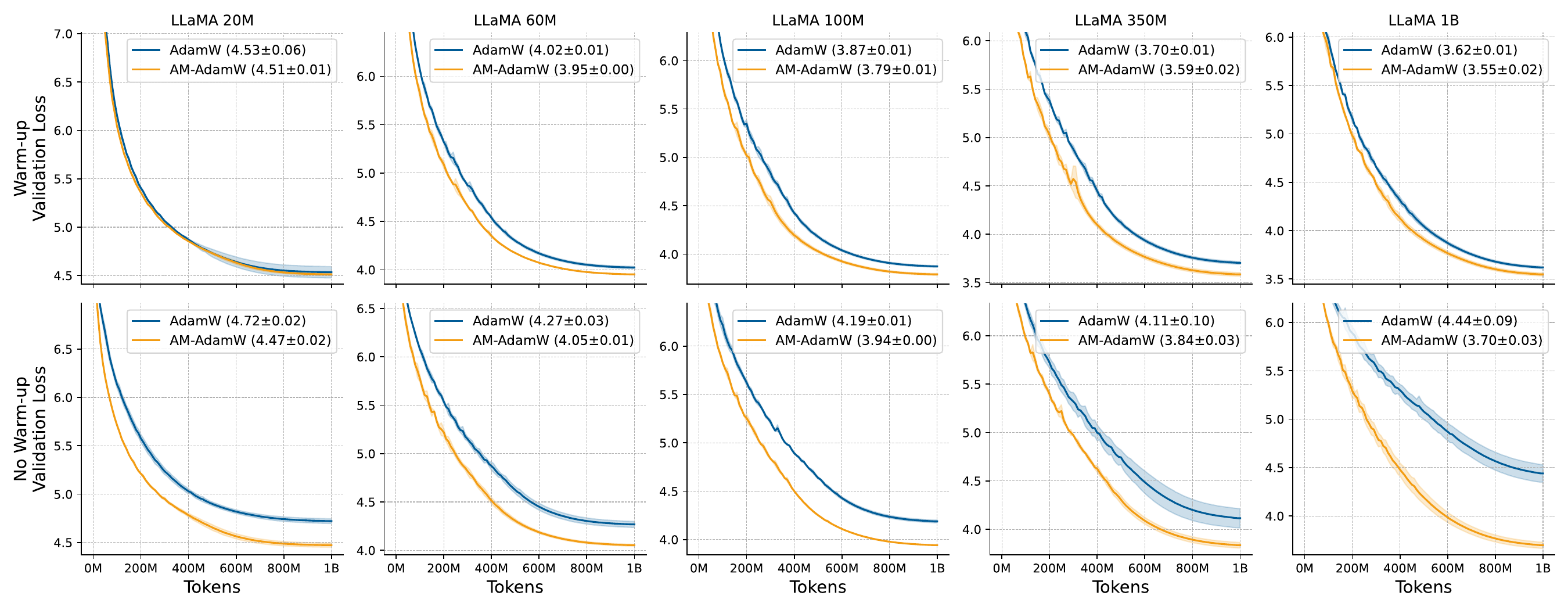} 
\caption{Validation loss curves for LLaMA pretraining on C4 across 5 different model scales.}
\vspace{-1em}
\label{llama}
\end{figure*}

As shown in Figure~\ref{fig:classification_experiments}, AM-MGD consistently outperforms fixed-momentum baselines, yielding both faster optimization and better generalization. This improvement is not only visible in the final accuracy, but also in the training dynamics: across the vision experiments, AM-MGD reaches strong performance substantially earlier, often requiring markedly fewer epochs to attain the same accuracy level as the fixed-momentum baseline. The trajectories of \(\beta_t\) further clarify how this improvement arises. Across the CIFAR experiments, we observe three recurring phases. Early in training, gradients are noisy and the EMA direction \(d_t\) is still poorly estimated, so the misalignment term \(\|d_t-g_t\|_2^2\) is large and \(\beta_t\) remains small, effectively reducing the method closer to SGD and stabilizing the initial phase. As training progresses and gradients become more coherent, the misalignment shrinks, \(\beta_t\) increases, and momentum becomes reliable; this coincides with the main acceleration regime, where we observe the largest gains in convergence speed and learning-rate robustness. Finally, in interpolating settings such as CIFAR-10/100, once the training loss becomes very small, the gradient signal weakens while stochastic noise persists. In this regime, \(\|d_t-g_t\|_2^2\) increases again and \(\beta_t\) exhibits frequent sharp drops toward zero, producing repeated restarts of the momentum buffer. This behaviour is qualitatively consistent with prior observations that momentum can become less beneficial near interpolation~\parencite{defazio2020momentum}. Larger-scale ImageNet experiments in the Supplement Section~\ref{sec:imagenet} show the same overall pattern.

\subsection{Pretraining Large Language Models}
We pretrain LLaMA models~\parencite{Touvron2023LLaMAOA,grattafiori2024llama} of varying scales on the English subset of C4~\parencite{raffel2020exploring}. Rather than training to convergence, these experiments focus on early-phase optimization over \(1\mathrm{k}\) iterations under a fixed token budget. For each model we tune the AdamW baseline over sequence length, batch size, learning rate, and \(\beta_2\), subject to the constraint \(\texttt{batch\_size}\times\texttt{seq\_len}=10^6\), corresponding to a total of \(10^9\) tokens over \(1\mathrm{k}\) steps. We then reuse the same hyperparameters for AM-AdamW, so any performance differences are attributable to momentum adaptation rather than additional tuning. Unlike the vision experiments, where the adaptive momentum coefficient is global, here we compute it separately for each layer to account for layer heterogeneity and to avoid having any single layer dominate the momentum adaptation of the entire model. Each model is trained under two regimes, with and without the standard linear warm-up schedule~\parencite{hagele2024scaling}, in order to study the effect of our method during the earliest phase of training.

The results demonstrate a clear and consistent advantage for AM-AdamW over the standard AdamW optimizer across all model scales. Most notably, even without warm-up, AM-AdamW compares to the performance of AdamW with warm-up. In contrast, AdamW struggles to train reliably without warm-up, particularly at larger scales. These findings highlight two key benefits of our adaptive memory approach: it stabilizes the early stages of training, leading to stronger and more consistent performance, and it shows promise as a hyperparameter-free alternative to warm-up.

\subsection{Ablation Study}
\label{sec:ablation}
To further evaluate our method, we conducted the following three ablation studies on the image classification tasks: ResNet18/50 on CIFAR10/100. 

    \textbf{Ablation on $\mathbf{\lambda}$} Figure \ref{fig:ablations_twocol}a: We examine the robustness of our method with respect to the choice of the hyperparameter \( \lambda \). In nearly all of our experiments, \( \lambda \) is set to 0.1. Notably, all values in the range \( [0.01, 1] \) appear to yield comparable performance. Moreover, when \( \lambda \) becomes too large, the effective range of \( \beta_t \) is reduced due to the scaling factor \( 1+\lambda \) (see Equation~\ref{adaptivememory}). As a result, \( \beta_t \) effectively reaches its upper bound, \( \beta_{\max} = 0.9 \). This explains why, for large values of \( \lambda \), our method exhibits behaviour similar to that of a fixed momentum coefficient.

     \textbf{Ablation on batch size } Figure \ref{fig:ablations_twocol}b: The approximate cutting planes model of the loss used to estimate $\beta_t$ at each iteration is constructed using stochastic gradients. Consequently, a key question arises: how does the stochasticity induced by the batch size influence the overall performance of our method? As expected, increasing the batch size while keeping the learning rate fixed tends to degrade generalization. However, smaller batch sizes, despite leading to more inaccurate models of the loss, do not negatively affect our method’s relative performance gains.

    \textbf{Ablation on learning rate } Figure \ref{fig:ablations_twocol}c: We fix the hyperparameters to the values used in image classification experiments captured in Figure~\ref{fig:classification_experiments} and vary the learning rate. Interestingly, our adaptive memory scheme extends the range of admissible learning rates, enabling convergence even at higher values, unlike the static baseline, which fails to converge as fast with large learning rates.

\begin{figure*}[t]
    \centering
    \begin{minipage}{0.32\textwidth}
        \centering
        \includegraphics[width=\linewidth]{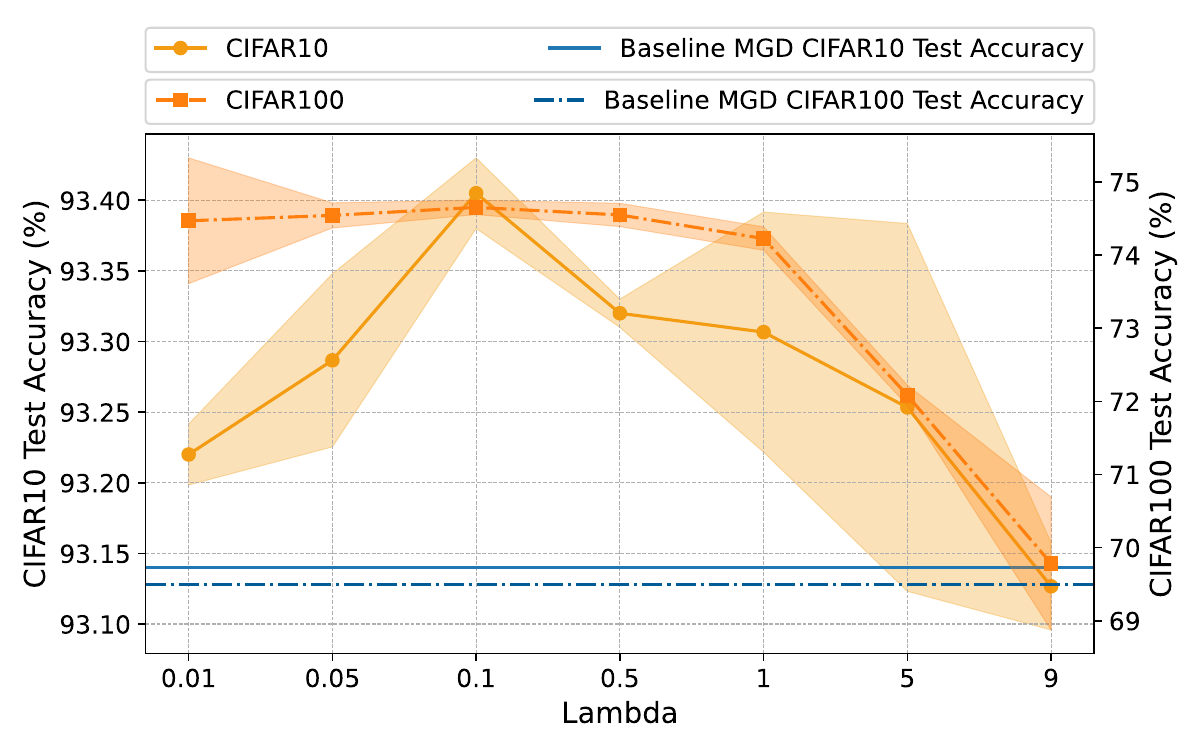} 
    \end{minipage}
    \hfill
    \begin{minipage}{0.32\textwidth}
        \centering
        \includegraphics[width=\linewidth]{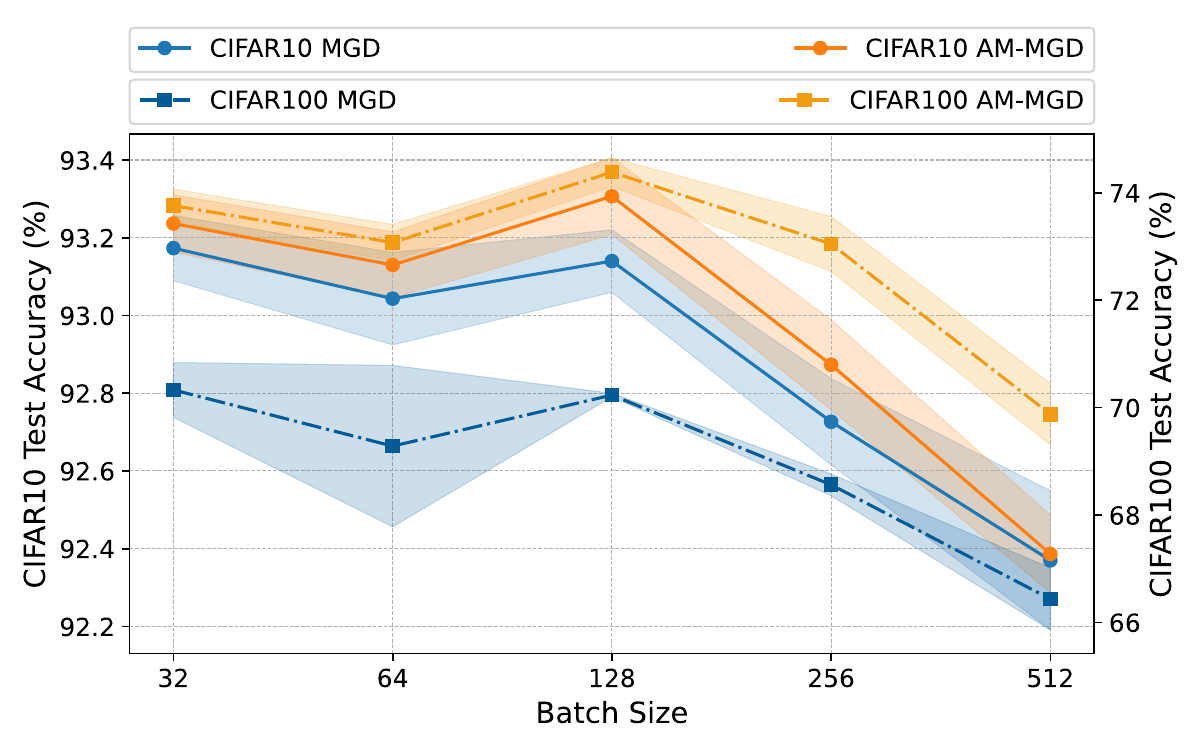} 
    \end{minipage}
    \hfill
    \begin{minipage}{0.32\textwidth}
        \centering
        \includegraphics[width=\linewidth]{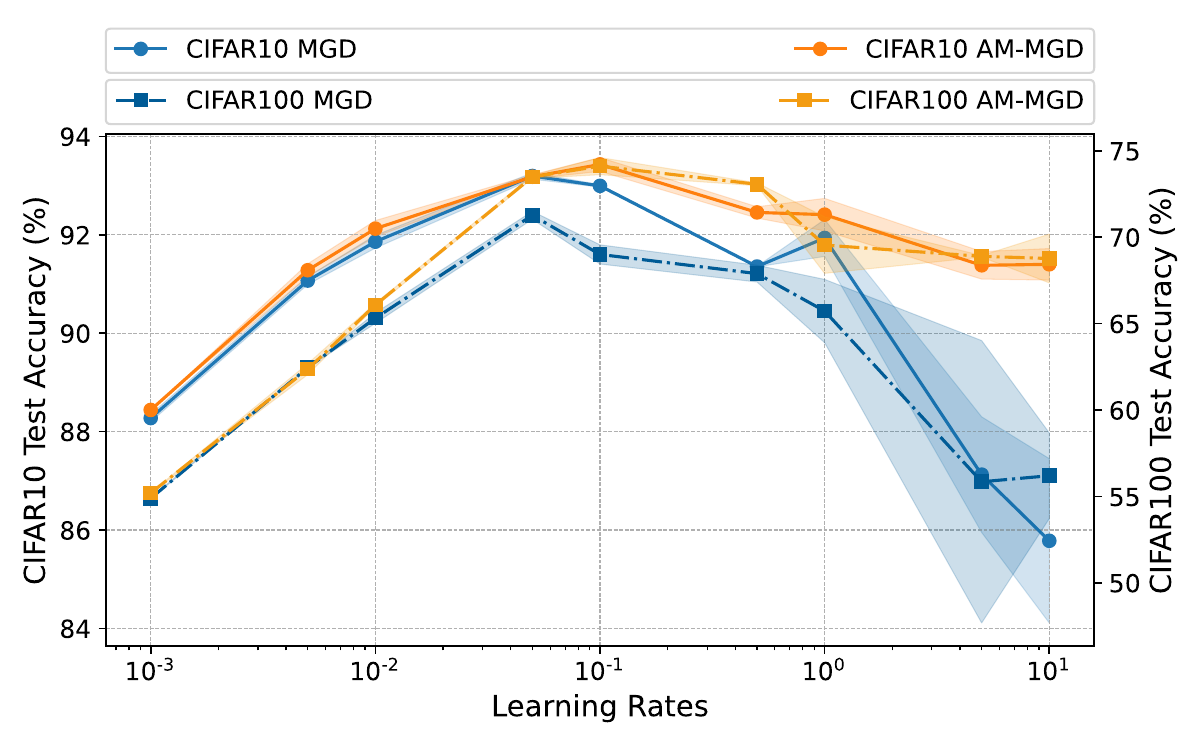}
        \label{fig:imagenet_main}
    \end{minipage}
    \vspace{-1em}
    \caption{Final test accuracy for ResNet18 (CIFAR-10) and ResNet50 (CIFAR-100) under different (left) $\lambda$, (middle) batch size, and (right) learning rate.}
     \vspace{-1em}
    \label{fig:ablations_twocol}
\end{figure*}

\subsection{What matters when adapting $\beta$}

When observing the behaviour of $\beta_t$, we identify two regimes: a high-momentum phase, where $\beta_t$ remains close to its maximum and carries long-term memory, and a low-momentum or ``soft restart'' phase, where $\beta_t$ drops sharply to discard stale information. To better understand these two modes of operation, we explore two variants of adaptive memory: \emph{clipping}, which enforces a lower bound on $\beta_t$, and \emph{restarting},  
which resets momentum when it falls below a threshold.
\begin{equation}
    \beta_t^{\text{Clip}} = \max(\beta_{\text{AM}}, \beta_{\min}),\quad \beta_t^{\text{Reset}} =
\begin{cases}
  0.9, & \beta_{\text{AM}} \geq \theta, \\[0.3em]
  0.1, & \beta_{\text{AM}} < \theta\notag
\end{cases}
\end{equation}
\begin{figure}[H]
    \centering
    \includegraphics[width=\linewidth]{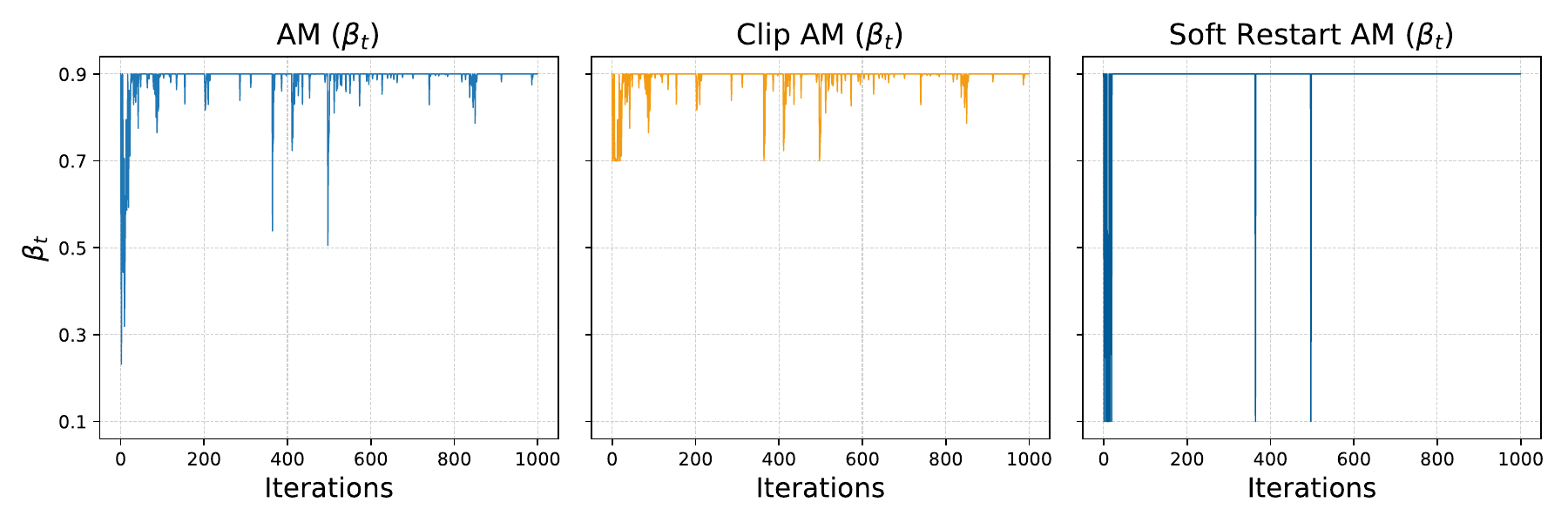}
    \caption{The three AM-derived policies.}
    \label{fig:three_betas}
\end{figure}

We find that early restarts carry much of the benefit by rapidly adapting to unstable initial dynamics as we see in Table~\ref{tab:three_betas}, however, these restarts alone do not explain all of the improvements, indicating that the ongoing adaptation of $\beta_t$ throughout training is also essential.

\begin{table}[H]
\footnotesize
  \centering
  \begin{tabular}{@{}lll@{}}
  \toprule
                  & \textbf{60M} & \textbf{100M} \\ \midrule
  AdamW             & 4.02         & 3.87          \\
  AM-AdamW          & 3.95         & 3.79          \\ \midrule
  AM-AdamW (Clip)   & 3.99         & 3.84          \\
  AM-AdamW(Restart) & 3.97         & 3.82          \\ \bottomrule
  \end{tabular}
  \caption{The three alternative AM optimizers, we set the restarting threshold $\theta = 0.7$.}
  \label{tab:three_betas}
\end{table}

\subsection{Choice of $\beta_{\max}$}

A natural question is how to set the ceiling $\beta_{\max}$ on the adaptive momentum coefficient. To keep the method as tuning-light as possible, in all main experiments we use the baseline-aligned heuristic $\beta_{\max}=\beta$, where \(\beta\) is the fixed momentum of the corresponding vanilla baseline (e.g., \(\beta_1=0.9\) for AdamW/SGD). This provides a simple default recipe and isolates the effect of adaptivity, since gains over the baseline cannot be attributed to a different global momentum value. Intuitively, larger ceilings can help when gradients are coherent, but in static methods they also retain unreliable EMA memory for longer. AM is designed to mitigate this by down-weighting such memory, which helps explain why its advantage over the static baseline becomes larger at higher ceilings. In practice, \(\beta_{\max}\in\{0.9,0.95\}\) works best, while \(\beta_{\max}=0.99\), although still much better than the corresponding static baseline, is somewhat less reliable.

\begin{table}[H]
\centering
\small
\begin{tabular}{@{}ccccc@{}}
\toprule
$\beta_{\max}$ & \multicolumn{2}{c}{60M} & \multicolumn{2}{c}{100M} \\ \midrule
               & AdamW  & AM-AdamW       & AdamW  & AM-AdamW        \\ \midrule
0.95           & 4.387  & \textbf{3.997} & 4.217  & \textbf{3.822}  \\
0.99           & 6.083  & \textbf{4.568} & 6.133  & \textbf{4.474}  \\ \bottomrule
\end{tabular}
\caption{Sensitivity to the momentum ceiling $\beta_{\max}$ on LLaMA pretraining. Lower final validation loss is better.}
\vspace{-1em}
\label{tab:beta_max_sensitivity}
\end{table}

\subsection{Computational Requirements}
In terms of computational cost, AM-MGD and AM-AdamW introduce minimal overhead compared to their non-adaptive counterparts. Specifically, each iteration of AM-MGD incurs an additional \(3N\) FLOPs, and AM-AdamW adds \(6N\) FLOPs, where \(N\) is the number of model parameters. For Transformer-based language models, the total FLOPs per iteration are approximately \(6NB\) \parencite{kaplan2020scaling}, where \(B\) is the number of tokens processed (batch size times sequence length). In our setup, this \(1/B\) overhead translates into a 0.2\% absolute time overhead. Moreover, the memory footprint of our methods is identical to that of the baselines, i.e., AM-MGD and AM-AdamW require no additional memory beyond what is already used by MGD and AdamW, respectively. Detailed wall-clock overhead can be found in the Supplement Section~\ref{sec:computations}.
\section{DISCUSSION AND FUTURE WORK}
\vspace{-0.5em}
\label{Sec:DFW}
We have presented Adaptive Memory (AM), a principled framework that endows momentum-based optimizers with a dynamic momentum coefficient. By casting each update as a proximal step on an approximate two-plane model of the loss, AM adapts to the local geometry of the loss landscape with limited additional hyperparameter overhead. AM integrates seamlessly with standard optimizers such as SGD and AdamW, incurs negligible overhead, and requires no changes to existing training pipelines. Empirically, AM delivers consistent gains across a spectrum of tasks, from convex benchmarks and vision models to large-scale LLM pretraining. In particular, AM-AdamW stabilizes the fragile early phase of LLM training and shows promise in eliminating the need for hand-tuned warm-up schedules, demonstrating robustness to large learning rates and simplifying large-scale workflows. Future work includes: (i) investigating memory in first-order methods and the impact of momentum restarts; (ii) advancing tuning-free momentum-based optimizers with dynamic and per-parameter momentum coefficients; and (iii) developing warm-up-free strategies for stable training.

\newpage
\clearpage
\section*{Acknowledgements}
The research presented in this paper was partially sponsored by the NSF CAREER Award \#2041872.
\printbibliography

\newpage
\clearpage
%%%%%%%%%%%%%%%%%%%%%%%%%%%%%%%%%%%%%%%%%%%%%%%%%%%%%%%%%%%%
\section*{Checklist}

\begin{enumerate}

  \item For all models and algorithms presented, check if you include:
  \begin{enumerate}
    \item A clear description of the mathematical setting, assumptions, algorithm, and/or model. Yes
    \item An analysis of the properties and complexity (time, space, sample size) of any algorithm. Yes
    \item (Optional) Anonymized source code, with specification of all dependencies, including external libraries. Yes
  \end{enumerate}

  \item For any theoretical claim, check if you include:
  \begin{enumerate}
    \item Statements of the full set of assumptions of all theoretical results. Yes
    \item Complete proofs of all theoretical results. Yes
    \item Clear explanations of any assumptions. Yes  
  \end{enumerate}

  \item For all figures and tables that present empirical results, check if you include:
  \begin{enumerate}
    \item The code, data, and instructions needed to reproduce the main experimental results (either in the supplemental material or as a URL). Yes
    \item All the training details (e.g., data splits, hyperparameters, how they were chosen). Yes
    \item A clear definition of the specific measure or statistics and error bars (e.g., with respect to the random seed after running experiments multiple times). Yes
    \item A description of the computing infrastructure used. (e.g., type of GPUs, internal cluster, or cloud provider). Yes
  \end{enumerate}

  \item If you are using existing assets (e.g., code, data, models) or curating/releasing new assets, check if you include:
  \begin{enumerate}
    \item Citations of the creator If your work uses existing assets. Yes
    \item The license information of the assets, if applicable. Not Applicable
    \item New assets either in the supplemental material or as a URL, if applicable. Yes
    \item Information about consent from data providers/curators. Not Applicable
    \item Discussion of sensible content if applicable, e.g., personally identifiable information or offensive content. Not Applicable
  \end{enumerate}

  \item If you used crowdsourcing or conducted research with human subjects, check if you include:
  \begin{enumerate}
    \item The full text of instructions given to participants and screenshots. Not Applicable
    \item Descriptions of potential participant risks, with links to Institutional Review Board (IRB) approvals if applicable. Not Applicable
    \item The estimated hourly wage paid to participants and the total amount spent on participant compensation. Not Applicable
  \end{enumerate}

\end{enumerate}

\clearpage

\appendix
\thispagestyle{empty}

% Supplementary material: To improve readability, you must use a single-column format for the supplementary material.
\onecolumn
\aistatstitle{Supplement}

\section{DERIVATION OF ADAPTIVE MEMORY MOMENTUM}
\label{sec:derivation}
We use the cutting-plane model
\begin{equation}
f_t^m(x)
=
\max\left\{
f(x_t) + g_t^\top (x-x_t),\;
\hat f(x_t) + d_t^\top (x-x_t)
\right\},
\qquad g_t := \nabla f(x_t).
\end{equation}

Let \(P_t\) be a diagonal preconditioner, and define
\begin{equation}
\tilde P_t := (I + \lambda P_t)P_t .
\end{equation}
The update is
\begin{equation}
x_{t+1}
\in
\arg\min_x
\left\{
f_t^m(x)
+
\frac{1}{2\eta}\|x-x_t\|_{\tilde P_t}^2
+
\lambda \langle d_t, x-x_t\rangle_{P_t}
+
\frac{\mu}{2}\|x\|_{\tilde P_t}^2
\right\}.
\end{equation}

Define \(\tilde x := x-x_t\). Then the problem becomes
\begin{equation}
\min_{\tilde x,\zeta}
\left\{
\zeta
+
\frac{1}{2\eta}\|\tilde x\|_{\tilde P_t}^2
+
\lambda \langle d_t,\tilde x\rangle_{P_t}
+
\frac{\mu}{2}\|x_t+\tilde x\|_{\tilde P_t}^2
\right\}
\end{equation}
subject to
\begin{equation}
\zeta \ge f(x_t)+g_t^\top \tilde x,
\qquad
\zeta \ge \hat f(x_t)+d_t^\top \tilde x.
\end{equation}

Introducing dual variables \(a_1,a_2\ge 0\), the Lagrangian is
\begin{align}
\mathcal L(\tilde x,\zeta,a_1,a_2)
&=
\zeta
+
\frac{1}{2\eta}\|\tilde x\|_{\tilde P_t}^2
+
\lambda \langle d_t,\tilde x\rangle_{P_t}
+
\frac{\mu}{2}\|x_t+\tilde x\|_{\tilde P_t}^2
\nonumber\\
&\quad
-a_1\bigl(\zeta-f(x_t)-g_t^\top \tilde x\bigr)
-a_2\bigl(\zeta-\hat f(x_t)-d_t^\top \tilde x\bigr).
\end{align}

The KKT conditions are
\begin{align}
\frac{\partial \mathcal L}{\partial \tilde x}=0
&\iff
\frac{1}{\eta}\tilde P_t \tilde x
+
\lambda P_t d_t
+
\mu \tilde P_t(\tilde x+x_t)
+
a_1 g_t
+
a_2 d_t
=0,
\label{eq:kkt-x}
\\
\frac{\partial \mathcal L}{\partial \zeta}=0
&\iff
a_1+a_2=1.
\label{eq:kkt-zeta}
\end{align}

Solving \eqref{eq:kkt-x} for \(\tilde x\) yields
\begin{equation}
\tilde x
=
-\frac{\eta}{1+\mu\eta}\,
\tilde P_t^{-1}
\Bigl(
(\lambda P_t + a_2 I)d_t
+
\mu \tilde P_t x_t
+
a_1 g_t
\Bigr).
\label{eq:tilde-x}
\end{equation}

Therefore
\begin{equation}
x_{t+1}
=
\frac{1}{1+\mu\eta}
\left(
x_t
-
\eta \tilde P_t^{-1}
\bigl(
a_1 g_t + a_2 d_t + \lambda P_t d_t
\bigr)
\right).
\label{eq:x-update-a}
\end{equation}

Substituting \eqref{eq:tilde-x} back into the objective gives the dual problem
\begin{align}
\max_{\substack{a_1,a_2\ge 0\\ a_1+a_2=1}}
\Biggl\{
-\frac{\eta}{2(1+\mu\eta)}
\left\|
(\lambda P_t+a_2 I)d_t
+
\mu \tilde P_t x_t
+
a_1 g_t
\right\|_{\tilde P_t^{-1}}^2
+
\frac{\mu}{2}\|x_t\|_{\tilde P_t}^2
+
a_1 f(x_t)
+
a_2 \hat f(x_t)
\Biggr\}.
\end{align}

Setting \(a_2=\beta\) and \(a_1=1-\beta\), we obtain
\begin{align}
\max_{0\le \beta \le 1}
\Biggl\{
-\frac{\eta}{2(1+\mu\eta)}
\left\|
\mu \tilde P_t x_t
+
g_t
+
\lambda P_t d_t
+
\beta(d_t-g_t)
\right\|_{\tilde P_t^{-1}}^2
+
\frac{\mu}{2}\|x_t\|_{\tilde P_t}^2
+
(1-\beta)f(x_t)
+
\beta \hat f(x_t)
\Biggr\}.
\end{align}

This is a concave quadratic program. Differentiating with respect to \(\beta\), setting the derivative to zero, and projecting onto \([0,1]\) yields
\begin{equation}
\beta_t^*
=
\operatorname{Clip}_{[0,1]}
\left(
\frac{
\frac{(1+\mu\eta)\bigl(\hat f(x_t)-f(x_t)\bigr)}{\eta}
-
\left\langle d_t-g_t,\; g_t+\lambda P_t d_t \right\rangle_{\tilde P_t^{-1}}
-
\mu \langle x_t,\; d_t-g_t\rangle
}{
\|d_t-g_t\|_{\tilde P_t^{-1}}^2
}
\right).
\label{eq:opt-beta-corrected}
\end{equation}

\begin{itemize}
    \item Setting $\mu=0$ and $P_t=I$ produces Momentum Gradient Descent
    \item Setting $\mu=0$ and $P_t$ equal to the Adam Preconditioner yields Adam.
    \item Setting $\mu>0$ and $P_t$ equal to the Adam Preconditioner yields AdamW.
\end{itemize}

In addition we can interpret Equation~\ref{eq:x-update-a} in two different ways, which are equivalent over one step but differ over what we store in memory.

$$(I):
\begin{cases}
    d_{t+1} = (\lambda P_t + I)^{-1}\left( \left(1-\beta_t\right)g_{t} +    \left(\lambda P_t + \beta_tI\right)d_{t} \right)\\
    x_{t+1} =\frac{1}{1+\mu\eta}\bigg(x_t -\eta P_t^{-1}d_{t+1}\bigg)
\end{cases}
$$

or 

$$
(II):
\begin{cases}
    d_{t+1} =  \left(1-\beta_t\right)g_{t}  + \beta_td_{t} \\
    x_{t+1} =\frac{1}{1+\mu\eta}\bigg(x_t -\eta P_t^{-1}(\lambda P_t + I)^{-1}\left( \left(1-\beta_t\right)g_{t} +    \left(\lambda P_t + \beta_tI\right)d_{t} \right)\bigg)
\end{cases}
$$

Of the two options, the first was found to perform slightly better. Additionally, it aligns with existing momentum-based methods. In contrast, the second option requires extra memory to store both \( d_{t+1} \), which represents the momentum for the next iteration, and \( d_t \), which is needed for the parameter update.

\section{DISCUSSION ON THE OVERESTIMATING MOMENTUM PLANE}
\label{sec:overestimating}

In this section we give a simple motivating example showing that, in a
non-overshooting momentum regime, a positive \emph{overestimation term} can be
aligned with the one-step optimal choice of momentum. We focus on heavy-ball
(HB) momentum on a diagonal strongly convex quadratic, where the dynamics
decouple across coordinates and can be analyzed exactly.

\subsection{Non-overshooting heavy-ball dynamics}

We consider the quadratic
\[
f(x)=\tfrac12 x^\top A x,
\qquad
A=\operatorname{diag}(a_1,\dots,a_d),
\qquad
0<a_d\le \cdots \le a_1=L,
\]
whose minimizer is \(x^*=0\). The heavy-ball iteration is
\[
d_{t+1}=\beta d_t+(1-\beta)Ax_t,
\qquad
x_{t+1}=x_t-\eta d_{t+1},
\]
with \(\beta\in[0,1)\), \(\eta>0\), and initialization
\[
d_0 = g_0 = Ax_0.
\]

\begin{definition}[Coordinate-wise monotonic decrease]
We say that the iterates satisfy the \emph{coordinate-wise monotonic decrease
condition} if for every \(t\ge 0\) and every coordinate \(i\),
\[
|x_{t+1}^i-x^{*i}| \le |x_t^i-x^{*i}|
\quad\text{and}\quad
\operatorname{sign}(x_{t+1}^i-x^{*i})
=
\operatorname{sign}(x_t^i-x^{*i}).
\]
Since \(x^*=0\) here, this is equivalent to
\[
|x_{t+1}^i| \le |x_t^i|
\quad\text{and}\quad
\operatorname{sign}(x_{t+1}^i)=\operatorname{sign}(x_t^i).
\]
\end{definition}

For the diagonal quadratic above, plain gradient descent corresponds to
\(\beta=0\) and satisfies this condition whenever \(\eta\le 1/L\). The next
proposition shows that heavy-ball also satisfies it in the overdamped regime.

\begin{proposition}[No-overshoot regime of HB on a diagonal quadratic]
\label{prop:hb_no_overshoot}
Assume
\[
0<\beta<1,
\qquad
\eta L < \frac{1-\sqrt{\beta}}{1+\sqrt{\beta}}.
\]
Then for every coordinate \(i\) with \(x_0^i\neq 0\) and every \(t\ge 0\),
\[
\operatorname{sign}(x_t^i)=\operatorname{sign}(x_0^i),
\qquad
\operatorname{sign}(g_t^i)=\operatorname{sign}(x_0^i),
\qquad
\operatorname{sign}(d_t^i)=\operatorname{sign}(x_0^i),
\]
and
\[
|x_{t+1}^i| < |x_t^i|.
\]
If \(x_0^i=0\), then \(x_t^i=d_t^i=g_t^i=0\) for all \(t\).
\end{proposition}

\begin{proof}
Fix a coordinate \(i\) and write \(a:=a_i>0\). If \(x_0^i=0\), then
\(d_0^i=g_0^i=0\), and the scalar recursion stays identically zero. We
therefore assume \(x_0^i\neq 0\). By symmetry it is enough to consider the case
\(x_0^i>0\); the case \(x_0^i<0\) follows by multiplying the scalar dynamics by
\(-1\).

The scalar HB recursion is
\[
d_{t+1}=\beta d_t+(1-\beta)ax_t,
\qquad
x_{t+1}=x_t-\eta d_{t+1}.
\]
Define \(z_t=(d_t,x_t)^\top\). Then
\[
z_{t+1}=Mz_t,
\qquad
M=
\begin{pmatrix}
\beta & (1-\beta)a\\
-\eta\beta & 1-\eta(1-\beta)a
\end{pmatrix}.
\]
The characteristic polynomial of \(M\) is
\[
r^2-\tau r+\beta=0,
\qquad
\tau = 1+\beta-\eta(1-\beta)a.
\]
Since \(a\le L\) and
\[
\eta L < \frac{1-\sqrt{\beta}}{1+\sqrt{\beta}},
\]
we also have
\[
\eta a < \frac{1-\sqrt{\beta}}{1+\sqrt{\beta}}.
\]
This is exactly the overdamped condition, and it implies that the two roots
\(r_1,r_2\) satisfy
\[
0<r_2<r_1<1.
\]

For each root \(r_j\), an eigenvector is
\[
u_j=
\begin{pmatrix}
-\theta_j\\
1
\end{pmatrix},
\qquad
\theta_j=\frac{(1-\beta)a}{\beta-r_j},
\qquad j=1,2.
\]
Hence
\[
x_t=c_1r_1^t+c_2r_2^t.
\]
Using the initialization
\[
(d_0,x_0)^\top = (ax_0,x_0)^\top = c_1u_1+c_2u_2,
\]
we obtain
\[
c_1=-\frac{x_0(a+\theta_2)}{\theta_1-\theta_2},
\qquad
c_2=\frac{x_0(a+\theta_1)}{\theta_1-\theta_2}.
\]

Because \(0<r_2<r_1<1\) and \(\beta=r_1r_2\), we have
\[
\beta-r_1<\beta-r_2<0,
\]
so
\[
\theta_2<\theta_1<0.
\]
Moreover,
\[
a+\theta_j
=
a+\frac{(1-\beta)a}{\beta-r_j}
=
\frac{(1-r_j)a}{\beta-r_j}
<0,
\qquad j=1,2.
\]
Since \(x_0>0\), this shows that \(c_1>0\) and \(c_2<0\). Also,
\[
\frac{|c_2|}{c_1}
=
\frac{-(a+\theta_1)}{-(a+\theta_2)}
<1,
\]
because \(\theta_1>\theta_2\) implies \(a+\theta_1>a+\theta_2\), and both
quantities are negative. Therefore
\[
c_1>|c_2|.
\]
It follows that for every \(t\ge 0\),
\[
x_t
=
r_1^t\!\left(c_1+c_2(r_2/r_1)^t\right)
=
r_1^t\!\left(c_1-|c_2|(r_2/r_1)^t\right)
\ge
r_1^t(c_1-|c_2|)
=
r_1^t x_0
>0.
\]
Thus \(x_t>0\) for all \(t\), so the sign of \(x_t\) never changes.

Since \(g_t=ax_t\), we also have \(g_t>0\) for all \(t\). Finally, because
\(d_0=ax_0>0\) and
\[
d_{t+1}=\beta d_t+(1-\beta)g_t
\]
is a convex combination of two positive quantities, it follows by induction that
\(d_t>0\) for all \(t\). Therefore
\[
x_{t+1}=x_t-\eta d_{t+1}<x_t.
\]
Combined with \(x_{t+1}>0\), this yields
\[
0<x_{t+1}<x_t,
\]
which is exactly the claimed coordinate-wise monotonic decrease condition. The
case \(x_0^i<0\) follows identically by symmetry.
\end{proof}

\subsection{Optimal one-step momentum}

A standard way to motivate adaptive step sizes is to choose the learning rate
\(\eta_t\) by minimizing the one-step distance to the optimum,
\[
J(\eta_t)=\|x_{t+1}(\eta_t)-x^*\|^2,
\]
which leads to the classical Polyak step size after eliminating the unknown
inner product by convexity. One can do the same with the momentum parameter.

\begin{proposition}[Optimal one-step momentum parameter]
\label{prop:one_step_beta}
Fix an iterate \(x_t\), a direction estimate \(d_t\), and the gradient
\(g_t=\nabla f(x_t)\). For \(\beta_t\in[0,1]\), define
\[
d_{t+1}(\beta_t)=\beta_t d_t+(1-\beta_t)g_t,
\qquad
x_{t+1}(\beta_t)=x_t-\eta d_{t+1}(\beta_t).
\]
Then the choice of \(\beta_t\) minimizing the one-step distance to the optimum,
\[
\beta_t^*
\in
\arg\min_{\beta\in[0,1]}
\|x_{t+1}(\beta)-x^*\|^2,
\]
is
\[
\beta_t^*
=
\operatorname{Clip}_{[0,1]}
\left(
\frac{
\frac{(d_t-g_t)^\top(x_t-x^*)}{\eta}
-d_t^\top g_t+\|g_t\|^2
}{
\|d_t-g_t\|^2
}
\right).
\]
\end{proposition}

\begin{proof}
Define
\[
J(\beta_t)
=
\|x_{t+1}(\beta_t)-x^*\|^2
=
\left\|
(x_t-x^*)-\eta\bigl[\beta_t d_t+(1-\beta_t)g_t\bigr]
\right\|^2.
\]
Set
\[
u=(x_t-x^*)-\eta g_t,
\qquad
v=d_t-g_t.
\]
Then
\[
x_{t+1}(\beta_t)-x^*=u-\eta\beta_t v,
\]
and therefore
\[
J(\beta_t)=\|u\|^2-2\eta\beta_t\,u^\top v+\eta^2\beta_t^2\|v\|^2.
\]
Differentiating and setting the derivative to zero gives
\[
\beta_t=\frac{u^\top v}{\eta\|v\|^2}.
\]
Substituting back \(u\) and \(v\) yields
\[
\beta_t
=
\frac{\bigl((x_t-x^*)-\eta g_t\bigr)^\top(d_t-g_t)}
{\eta\|d_t-g_t\|^2}
=
\frac{
\frac{(d_t-g_t)^\top(x_t-x^*)}{\eta}
-d_t^\top g_t+\|g_t\|^2
}{
\|d_t-g_t\|^2
}.
\]
Projecting onto \([0,1]\) gives the stated formula.
\end{proof}

The only term in Proposition~\ref{prop:one_step_beta} that depends explicitly on
the unknown optimum is
\[
(d_t-g_t)^\top(x_t-x^*).
\]
We now show that in the non-overshooting quadratic regime above, this term is
coordinate-wise nonnegative.

\begin{lemma}
\label{lem:positive_alignment}
Under the assumptions of Proposition~\ref{prop:hb_no_overshoot}, for every
\(t\ge 0\) and every coordinate \(i\),
\[
(d_t^i-g_t^i)x_t^i \ge 0.
\]
Consequently,
\[
(d_t-g_t)^\top(x_t-x^*)=(d_t-g_t)^\top x_t \ge 0.
\]
\end{lemma}

\begin{proof}
We argue by induction on \(t\).

For \(t=0\), since \(d_0^i=g_0^i=a_i x_0^i\),
\[
(d_0^i-g_0^i)x_0^i = 0.
\]

Now assume \((d_t^i-g_t^i)x_t^i\ge 0\). By
Proposition~\ref{prop:hb_no_overshoot}, \(x_t^i\) and \(x_{t+1}^i\) have the
same sign, and \(g_t^i=a_i x_t^i\) has that sign as well. Also,
\[
d_{t+1}^i=\beta d_t^i+(1-\beta)g_t^i,
\qquad
g_{t+1}^i=a_i x_{t+1}^i.
\]
A direct calculation gives
\[
d_{t+1}^i-g_{t+1}^i
=
\beta(1+\eta a_i)(d_t^i-g_t^i)+\eta a_i g_t^i.
\]
By the induction hypothesis, \(d_t^i-g_t^i\) has the same sign as \(x_t^i\),
and \(g_t^i\) also has the same sign as \(x_t^i\). Since all coefficients are
nonnegative, the right-hand side has the same sign as \(x_t^i\), hence also the
same sign as \(x_{t+1}^i\). Therefore
\[
(d_{t+1}^i-g_{t+1}^i)x_{t+1}^i \ge 0.
\]
This proves the coordinate-wise claim. Summing over \(i\) and using \(x^*=0\)
gives
\[
(d_t-g_t)^\top(x_t-x^*) = \sum_{i=1}^d (d_t^i-g_t^i)x_t^i \ge 0.
\]
\end{proof}

Lemma~\ref{lem:positive_alignment} shows that, in this non-overshooting
quadratic regime, the one-step optimal momentum parameter contains a
nonnegative contribution from the term
\[
\frac{(d_t-g_t)^\top(x_t-x^*)}{\eta}.
\]
This provides a concrete setting in which replacing the unknown optimum-dependent
quantity by a positive surrogate, equivalently, using an \emph{overestimating}
model term, is directionally consistent with improving one-step progress.

\section{PROOFS OF SECTION \ref{theory}}

We provide convergence guarantees for our proposed method in the standard finite-sum setting:
\[
\min_{x \in \mathbb{R}^d} f(x)
\;=\;
\frac{1}{n}\sum_{i=1}^n f_{S_i}(x),
\]
where each \(f_{S_i}\) denotes the loss associated with a mini-batch \(S_i\), and the goal is to minimize the average loss \(f\). We proceed under the following assumptions.

\begin{assumption}[Smoothness]
The function \(f\) is \(L\)-smooth if, for all \(x,y \in \mathbb{R}^d\),
\begin{equation}
f(y)
\le
f(x) + \nabla f(x)^\top (y-x) + \frac{L}{2}\|y-x\|^2.
\end{equation}
\end{assumption}

\begin{assumption}[Bounded stochastic gradient norm]
There exists \(G>0\) such that, for all \(x\),
\begin{equation}
\mathbb{E}_{S}\!\left[\|\nabla f_{S}(x)\|^2\right]
\le
G^2.
\label{ass:bounded_stoch_grad}
\end{equation}
\end{assumption}

\paragraph{Algorithm.}
We prove convergence for a simplified version of AM-SGD-M with \(\lambda=0\).
Let \(\eta>0\) be the step size, let \(0 \le \beta_t \le \beta_{\max}<1\), and initialize \(d_0=0\).
At iteration \(t\), let
\[
g_t := \nabla f_{S_t}(x_t),
\]
where \(S_t\) denotes the sampled mini-batch. The updates are
\begin{equation}
\label{eq:mom_for_proof}
\begin{aligned}
\beta_t
&=
\operatorname{Clip}_{[0,\beta_{\max}]}
\left(
\frac{\|g_t\|^2}{\|d_t-g_t\|^2}
\right),\\
d_{t+1}
&=
\beta_t d_t + (1-\beta_t) g_t,\\
x_{t+1}
&=
x_t - \eta d_{t+1}.
\end{aligned}
\end{equation}
When \(d_t=g_t\), the denominator vanishes; in that case we define \(\beta_t:=0\) by convention, noting that then \(d_{t+1}=g_t\) regardless of the value of \(\beta_t\).

We denote by \(x^*\) an optimal point, i.e. \(x^* \in \arg\min_x f(x)\).

\begin{lemma}
\label{lem:velocity_bound}
Consider the iterates generated by \eqref{eq:mom_for_proof}. Assume that
\[
\mathbb{E}\bigl[\|g_t\|^2\bigr] \le G^2.
\]
Then, for every \(t\),
\[
\mathbb{E}\bigl[\|d_{t+1}\|^2\bigr] \le 4G^2.
\]
\end{lemma}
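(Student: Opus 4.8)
The plan is to obtain a one-step recursion for $\mathbb{E}[\|d_{t+1}\|_2^2]$ that is driven purely by $\mathbb{E}[\|\nabla f_{S_t}(x_t)\|_2^2]$ and then close it by induction starting from $d_0=0$. Writing $g_t=\nabla f_{S_t}(x_t)$ and expanding the convex combination $d_{t+1}=\beta_t d_t+(1-\beta_t)g_t$ with the identity $2\langle d_t,g_t\rangle=\|d_t\|_2^2+\|g_t\|_2^2-\|d_t-g_t\|_2^2$ gives the exact relation
\[
\|d_{t+1}\|_2^2=\beta_t\|d_t\|_2^2+(1-\beta_t)\|g_t\|_2^2-\beta_t(1-\beta_t)\|d_t-g_t\|_2^2 .
\]

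The crucial step is to use the explicit form of $\beta_t$. Since $\beta_t=\Clip_{[0,\beta_{\max}]}\bigl(\|g_t\|_2^2/\|d_t-g_t\|_2^2\bigr)=\min\bigl(\|g_t\|_2^2/\|d_t-g_t\|_2^2,\ \beta_{\max}\bigr)$ (the lower clip being inactive because the ratio is nonnegative), we always have $\beta_t\|d_t-g_t\|_2^2\le\|g_t\|_2^2$, with equality exactly when the upper clip is not binding. I would then split into cases: if $\beta_t<\beta_{\max}$, then $\beta_t\|d_t-g_t\|_2^2=\|g_t\|_2^2$, the last two terms of the identity cancel, and $\|d_{t+1}\|_2^2=\beta_t\|d_t\|_2^2\le\beta_{\max}\|d_t\|_2^2$; if $\beta_t=\beta_{\max}$, dropping the nonpositive term $-\beta_{\max}(1-\beta_{\max})\|d_t-g_t\|_2^2$ yields $\|d_{t+1}\|_2^2\le\beta_{\max}\|d_t\|_2^2+(1-\beta_{\max})\|g_t\|_2^2$, which is just convexity of $\|\cdot\|_2^2$; and the degenerate case $d_t=g_t$ gives $d_{t+1}=d_t=g_t$ and is covered by the same inequality with equality. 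Hence in every case
\[
\|d_{t+1}\|_2^2\le\beta_{\max}\|d_t\|_2^2+(1-\beta_{\max})\|g_t\|_2^2 .
\]

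Since $\beta_{\max}$ is a deterministic constant, taking expectations is immediate and needs no independence of $\beta_t$ from $(d_t,g_t)$ --- which is precisely why the upper clip on $\beta_t$ matters --- and the bounded stochastic gradient assumption gives $\mathbb{E}[\|d_{t+1}\|_2^2]\le\beta_{\max}\,\mathbb{E}[\|d_t\|_2^2]+(1-\beta_{\max})G^2$. Using $\mathbb{E}[\|d_0\|_2^2]=0$, an induction with $\beta_{\max}<1$ closes it: if $\mathbb{E}[\|d_t\|_2^2]\le 2G^2$ then $\mathbb{E}[\|d_{t+1}\|_2^2]\le(1+\beta_{\max})G^2\le 2G^2$; in fact unrolling the recursion gives the slightly sharper $\mathbb{E}[\|d_{t+1}\|_2^2]\le(1-\beta_{\max}^{t+1})G^2\le G^2$. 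The only real obstacle is the case analysis on the clip: one must notice that the adaptive value of $\beta_t$ is designed so that the $(1-\beta_t)\|g_t\|_2^2$ contribution is annihilated by $-\beta_t(1-\beta_t)\|d_t-g_t\|_2^2$ whenever the clip is not binding, which is what prevents the velocity from accumulating.
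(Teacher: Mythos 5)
Your proof is correct, and it takes a genuinely different---and in fact tighter---route than the paper's. The paper argues in a single step with no induction: it writes $d_{t+1}=g_t+\beta_t(d_t-g_t)$, discards the cross term by invoking the ``triangle inequality'' to claim $\|d_{t+1}\|_2^2\le\|g_t\|_2^2+\beta_t^2\|d_t-g_t\|_2^2$, and then uses the same structural property of the clip that you isolate, $\beta_t\|d_t-g_t\|_2^2\le\|g_t\|_2^2$, to conclude $\|d_{t+1}\|_2^2\le 2\|g_t\|_2^2$ pointwise, without ever using $d_0=0$. Strictly speaking, the dropped cross term $2\beta_t\langle g_t,\,d_t-g_t\rangle$ is not sign-controlled, so the paper's pointwise intermediate inequality can fail (in one dimension with $g_t=1$, $d_t=2$, $\beta_{\max}=0.9$ one gets $\|d_{t+1}\|_2^2=3.61>2\|g_t\|_2^2$); your exact convex-combination identity $\|d_{t+1}\|_2^2=\beta_t\|d_t\|_2^2+(1-\beta_t)\|g_t\|_2^2-\beta_t(1-\beta_t)\|d_t-g_t\|_2^2$ accounts for that term exactly and your case analysis on the clip is airtight. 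The price of your route is the induction and the reliance on the initialization $d_0=0$ (which the paper's proof setup does specify); what it buys is a clean deterministic recursion $\|d_{t+1}\|_2^2\le\beta_{\max}\|d_t\|_2^2+(1-\beta_{\max})\|g_t\|_2^2$ with a constant coefficient that trivially survives the expectation, and the sharper conclusion $\mathbb{E}[\|d_{t+1}\|_2^2]\le G^2\le 2G^2$.
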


\begin{proof}
We rewrite the momentum update as
\[
d_{t+1}
=
g_t + \beta_t(d_t-g_t).
\]
Using Young's inequality, for any \(\rho>0\),
\[
\|a+b\|^2
\le
(1+\rho)\|a\|^2 + \left(1+\frac{1}{\rho}\right)\|b\|^2.
\]
Applying this with
\[
a=g_t,
\qquad
b=\beta_t(d_t-g_t),
\]
gives
\[
\|d_{t+1}\|^2
\le
(1+\rho)\|g_t\|^2
+
\left(1+\frac{1}{\rho}\right)\beta_t^2\|d_t-g_t\|^2.
\]
By the definition of \(\beta_t\),
\[
\beta_t
\le
\frac{\|g_t\|^2}{\|d_t-g_t\|^2}
\]
whenever \(d_t\neq g_t\), and trivially also when \(d_t=g_t\) under the convention above. Hence
\[
\beta_t \|d_t-g_t\|^2 \le \|g_t\|^2.
\]
Since \(0\le \beta_t \le \beta_{\max}<1\), we also have \(\beta_t^2 \le \beta_t\), so
\[
\beta_t^2\|d_t-g_t\|^2
\le
\beta_t\|d_t-g_t\|^2
\le
\|g_t\|^2.
\]
Substituting this into the previous bound yields
\[
\|d_{t+1}\|^2
\le
\left(2+\rho+\frac{1}{\rho}\right)\|g_t\|^2.
\]
Choosing \(\rho=1\), we obtain
\[
\|d_{t+1}\|^2 \le 4\|g_t\|^2.
\]
Taking expectation and using \(\mathbb{E}[\|g_t\|^2]\le G^2\) gives
\[
\mathbb{E}\bigl[\|d_{t+1}\|^2\bigr]
\le
4G^2.
\]
\end{proof}

The following technical lemmas provide upper bounds on the cross terms that arise when expanding the momentum recursion. Both results rely on the definition of \(\beta_t\) to relate the correction term \(d_t-g_t\) to the stochastic gradient itself, and on Young's inequality to separate mixed inner products.
\begin{lemma}
\label{lem:young_momentum_bound_1}
Consider the iterates generated by \eqref{eq:mom_for_proof}. For any \(a>0\),
\[
-2\eta\beta_t \langle x_t-x^*,\, d_t-g_t\rangle
\;\le\;
\beta_{\max} a \|x_t-x^*\|^2
+
\frac{\eta^2}{a}\|g_t\|^2,
\]
where \(g_t := \nabla f_{S_t}(x_t)\).
\end{lemma}

\begin{proof}
By Cauchy--Schwarz,
\[
-2\eta\beta_t \langle x_t-x^*,\, d_t-g_t\rangle
\;\le\;
2\eta\beta_t \|x_t-x^*\|\,\|d_t-g_t\|.
\]
Applying Young's inequality \(2uv \le a u^2 + \frac{1}{a}v^2\) with
\[
u=\sqrt{\beta_t}\,\|x_t-x^*\|,
\qquad
v=\eta\sqrt{\beta_t}\,\|d_t-g_t\|,
\]
gives
\[
2\eta\beta_t \|x_t-x^*\|\,\|d_t-g_t\|
\;\le\;
\beta_t a \|x_t-x^*\|^2
+
\frac{\eta^2\beta_t}{a}\|d_t-g_t\|^2.
\]
Since \(\beta_t \le \beta_{\max}\), the first term satisfies
\[
\beta_t a \|x_t-x^*\|^2
\;\le\;
\beta_{\max} a \|x_t-x^*\|^2.
\]
Moreover, by the definition of \(\beta_t\),
\[
\beta_t \|d_t-g_t\|^2 \le \|g_t\|^2,
\]
with the inequality being trivial when \(d_t=g_t\). Therefore,
\[
\frac{\eta^2\beta_t}{a}\|d_t-g_t\|^2
\;\le\;
\frac{\eta^2}{a}\|g_t\|^2.
\]
Combining the two bounds yields
\[
-2\eta\beta_t \langle x_t-x^*,\, d_t-g_t\rangle
\;\le\;
\beta_{\max} a \|x_t-x^*\|^2
+
\frac{\eta^2}{a}\|g_t\|^2.
\]
\end{proof}

\begin{lemma}
\label{lem:young_momentum_bound_2}
Consider the iterates generated by \eqref{eq:mom_for_proof}. For any \(a_t>0\),
\[
-\eta\beta_t \langle \nabla f(x_t),\, d_t-g_t\rangle
\;\le\;
\frac{\beta_{\max}}{2a_t}\|\nabla f(x_t)\|^2
+
\frac{a_t\eta^2}{2}\|g_t\|^2,
\]
where \(g_t := \nabla f_{S_t}(x_t)\).
\end{lemma}

\begin{proof}
By Cauchy--Schwarz,
\[
-\eta\beta_t \langle \nabla f(x_t),\, d_t-g_t\rangle
\;\le\;
\eta\beta_t \|\nabla f(x_t)\|\,\|d_t-g_t\|.
\]
Applying Young's inequality \(uv \le \frac{u^2}{2a_t} + \frac{a_t}{2}v^2\) with
\[
u=\sqrt{\beta_t}\,\|\nabla f(x_t)\|,
\qquad
v=\eta\sqrt{\beta_t}\,\|d_t-g_t\|,
\]
gives
\[
\eta\beta_t \|\nabla f(x_t)\|\,\|d_t-g_t\|
\;\le\;
\frac{\beta_t}{2a_t}\|\nabla f(x_t)\|^2
+
\frac{a_t\eta^2\beta_t}{2}\|d_t-g_t\|^2.
\]
Using \(\beta_t \le \beta_{\max}\) for the first term and
\[
\beta_t\|d_t-g_t\|^2 \le \|g_t\|^2
\]
for the second term, we obtain
\[
-\eta\beta_t \langle \nabla f(x_t),\, d_t-g_t\rangle
\;\le\;
\frac{\beta_{\max}}{2a_t}\|\nabla f(x_t)\|^2
+
\frac{a_t\eta^2}{2}\|g_t\|^2.
\]
\end{proof}

\begin{theorem}[Convex convergence]
\label{thm:convex_app}
Assume that each stochastic component \(f_{S}\) is convex, that the stochastic
gradient is unbiased,
\[
\mathbb{E}[\,\nabla f_{S_t}(x_t)\mid x_t\,]=\nabla f(x_t),
\]
and that Assumption~\ref{ass:bounded_stoch_grad} holds. Let
\(x^* \in \arg\min_x f(x)\). Consider the iterates generated by
\eqref{eq:mom_for_proof} with
\[
\eta=\frac{1}{\sqrt T},
\qquad
\beta_{\max}=\frac{1}{T}.
\]
Define the weights
\[
a_t:=\left(1+\frac{1}{T}\right)^{-(t+1)},
\qquad t=0,\dots,T-1,
\]
and the weighted average iterate
\[
x_T^{(a)}
:=
\frac{\sum_{t=0}^{T-1} a_t x_t}{\sum_{t=0}^{T-1} a_t}.
\]
Then

\[
\mathbb{E}\bigl[f(x_T^{(a)})-f(x^*)\bigr]
\le
\frac{1}{2\sqrt T}
\left(
2\|x_0-x^*\|^2
+
5G^2
\right).
\]
\end{theorem}

\begin{proof}
Let
\[
g_t:=\nabla f_{S_t}(x_t).
\]
From the update \(x_{t+1}=x_t-\eta d_{t+1}\), we have
\begin{align*}
\|x_{t+1}-x^*\|^2
&=
\|x_t-x^*\|^2
-2\eta\langle x_t-x^*,\,d_{t+1}\rangle
+\eta^2\|d_{t+1}\|^2 \\
&=
\|x_t-x^*\|^2
-2\eta\langle x_t-x^*,\,g_t\rangle
-2\eta\beta_t\langle x_t-x^*,\,d_t-g_t\rangle
+\eta^2\|d_{t+1}\|^2.
\end{align*}
By convexity of \(f_{S_t}\),
\[
\langle x_t-x^*,\,g_t\rangle
\ge
f_{S_t}(x_t)-f_{S_t}(x^*).
\]
Applying Lemma~\ref{lem:young_momentum_bound_1} with \(a=1\),
\[
-2\eta\beta_t\langle x_t-x^*,\,d_t-g_t\rangle
\le
\beta_{\max}\|x_t-x^*\|^2+\eta^2\|g_t\|^2.
\]
Therefore,
\[
\|x_{t+1}-x^*\|^2
\le
(1+\beta_{\max})\|x_t-x^*\|^2
-2\eta\bigl(f_{S_t}(x_t)-f_{S_t}(x^*)\bigr)
+\eta^2\|g_t\|^2+\eta^2\|d_{t+1}\|^2.
\]
Taking expectation, using unbiasedness, Assumption~\ref{ass:bounded_stoch_grad},
and Lemma~\ref{lem:velocity_bound}, we obtain
\[
\mathbb{E}\bigl[\|x_{t+1}-x^*\|^2\bigr]
\le
(1+\beta_{\max})\mathbb{E}\bigl[\|x_t-x^*\|^2\bigr]
-2\eta\,\mathbb{E}\bigl[f(x_t)-f(x^*)\bigr]
+5\eta^2G^2.
\]

Now define the weights recursively by
\[
a_{t-1}=(1+\beta_{\max})a_t,
\qquad
a_{-1}=1.
\]
Since \(\beta_{\max}=1/T\), this yields
\[
a_t=\left(1+\frac{1}{T}\right)^{-(t+1)}.
\]
Multiplying the previous inequality by \(a_t\) gives
\[
a_t\,\mathbb{E}\bigl[\|x_{t+1}-x^*\|^2\bigr]
\le
a_{t-1}\,\mathbb{E}\bigl[\|x_t-x^*\|^2\bigr]
-2\eta a_t\,\mathbb{E}\bigl[f(x_t)-f(x^*)\bigr]
+5\eta^2G^2a_t.
\]
Rearranging and summing from \(t=0\) to \(T-1\),
\[
2\eta\sum_{t=0}^{T-1} a_t\,\mathbb{E}\bigl[f(x_t)-f(x^*)\bigr]
\le
\|x_0-x^*\|^2
-
a_{T-1}\mathbb{E}\bigl[\|x_T-x^*\|^2\bigr]
+
5\eta^2G^2\sum_{t=0}^{T-1} a_t.
\]
Dropping the nonnegative term
\(a_{T-1}\mathbb{E}[\|x_T-x^*\|^2]\), we get
\[
2\eta\sum_{t=0}^{T-1} a_t\,\mathbb{E}\bigl[f(x_t)-f(x^*)\bigr]
\le
\|x_0-x^*\|^2
+
5\eta^2G^2\sum_{t=0}^{T-1} a_t.
\]

Since \(f\) is convex,
\[
f(x_T^{(a)})
\le
\sum_{t=0}^{T-1}\frac{a_t}{\sum_{s=0}^{T-1}a_s}f(x_t).
\]
Taking expectation and subtracting \(f(x^*)\),
\[
\mathbb{E}\bigl[f(x_T^{(a)})-f(x^*)\bigr]
\le
\frac{\sum_{t=0}^{T-1} a_t\,\mathbb{E}[f(x_t)-f(x^*)]}
{\sum_{t=0}^{T-1} a_t}.
\]
Combining with the previous bound yields
\[
\mathbb{E}\bigl[f(x_T^{(a)})-f(x^*)\bigr]
\le
\frac{\|x_0-x^*\|^2}{2\eta\sum_{t=0}^{T-1} a_t}
+
\frac{5}{2}\eta G^2.
\]

It remains to lower bound \(\sum_{t=0}^{T-1} a_t\). Since
\[
a_t=\left(1+\frac{1}{T}\right)^{-(t+1)},
\]
we have
\[
\sum_{t=0}^{T-1} a_t
=
\frac{1-(1+\frac{1}{T})^{-T}}{\frac{1}{T}\cdot(1+\frac{1}{T})^{-1}}\cdot\frac{1}{T}
=
T\left(1-\left(1+\frac{1}{T}\right)^{-T}\right).
\]
Since \((1+1/T)^{T}\) is increasing in \(T\) and equals \(2\) at \(T=1\),
we have \((1+1/T)^{-T}\le 1/2\) for all \(T\ge 1\), and therefore
\[
\sum_{t=0}^{T-1} a_t
\;\ge\;
\frac{T}{2}.
\]
Using \(\eta=1/\sqrt{T}\), we conclude
\[
\mathbb{E}\bigl[f(x_T^{(a)})-f(x^*)\bigr]
\le
\frac{\|x_0-x^*\|^2}{2\eta\cdot T/2}
+
\frac{5}{2}\eta G^2
=
\frac{\|x_0-x^*\|^2}{\sqrt{T}}
+
\frac{5G^2}{2\sqrt{T}}
=
\frac{1}{2\sqrt T}
\left(
2\|x_0-x^*\|^2
+
5G^2
\right).
\]
\end{proof}

\begin{theorem}[Non-convex convergence]
\label{thm:nonconvex-convergence-app}
Assume that \(f\) is \(L\)-smooth, that the stochastic gradient is unbiased,
\[
\mathbb{E}[\,\nabla f_{S_t}(x_t)\mid x_t\,]=\nabla f(x_t),
\]
and that Assumption~\ref{ass:bounded_stoch_grad} holds. Consider the iterates
generated by \eqref{eq:mom_for_proof} with
\[
\eta=\frac{1}{\sqrt T},
\qquad
\beta_{\max}=c\eta,
\qquad
0<c<1.
\]
Then
\[
\min_{0\le t\le T-1}\mathbb{E}\bigl[\|\nabla f(x_t)\|^2\bigr]
\le
\frac{1}{\sqrt T(1-c)}
\left(
\mathbb{E}[f(x_0)-f(x_T)]
+
\frac{1+8L}{4}G^2
\right).
\]
\end{theorem}

\begin{proof}
Let
\[
g_t:=\nabla f_{S_t}(x_t).
\]
By \(L\)-smoothness of \(f\),
\begin{align*}
f(x_{t+1})
&\le
f(x_t)
+\langle \nabla f(x_t),\,x_{t+1}-x_t\rangle
+\frac{L}{2}\|x_{t+1}-x_t\|^2 \\
&=
f(x_t)
-\eta\langle \nabla f(x_t),\,d_{t+1}\rangle
+\frac{L\eta^2}{2}\|d_{t+1}\|^2.
\end{align*}
Using
\[
d_{t+1}=g_t+\beta_t(d_t-g_t),
\]
we obtain
\[
f(x_{t+1})
\le
f(x_t)
-\eta\langle \nabla f(x_t),\,g_t\rangle
-\eta\beta_t\langle \nabla f(x_t),\,d_t-g_t\rangle
+\frac{L\eta^2}{2}\|d_{t+1}\|^2.
\]

Applying Lemma~\ref{lem:young_momentum_bound_2} with \(a_t=\tfrac12\),
\[
-\eta\beta_t\langle \nabla f(x_t),\,d_t-g_t\rangle
\le
\beta_{\max}\|\nabla f(x_t)\|^2
+\frac{\eta^2}{4}\|g_t\|^2.
\]
Hence,
\[
f(x_{t+1})
\le
f(x_t)
-\eta\langle \nabla f(x_t),\,g_t\rangle
+\beta_{\max}\|\nabla f(x_t)\|^2
+\frac{\eta^2}{4}\|g_t\|^2
+\frac{L\eta^2}{2}\|d_{t+1}\|^2.
\]

Taking expectation and using unbiasedness,
\[
\mathbb{E}\bigl[\langle \nabla f(x_t),\,g_t\rangle\bigr]
=
\mathbb{E}\bigl[\|\nabla f(x_t)\|^2\bigr].
\]
Using Assumption~\ref{ass:bounded_stoch_grad} and
Lemma~\ref{lem:velocity_bound},
\[
\mathbb{E}\bigl[\|g_t\|^2\bigr]\le G^2,
\qquad
\mathbb{E}\bigl[\|d_{t+1}\|^2\bigr]\le 4G^2.
\]
Therefore,
\[
\mathbb{E}[f(x_{t+1})]
\le
\mathbb{E}[f(x_t)]
-\bigl(\eta-\beta_{\max}\bigr)\mathbb{E}\bigl[\|\nabla f(x_t)\|^2\bigr]
+\eta^2G^2\left(\frac14+2L\right).
\]
Equivalently,
\[
\bigl(\eta-\beta_{\max}\bigr)\mathbb{E}\bigl[\|\nabla f(x_t)\|^2\bigr]
\le
\mathbb{E}[f(x_t)-f(x_{t+1})]
+\eta^2G^2\left(\frac14+2L\right).
\]

Summing from \(t=0\) to \(T-1\),
\[
\bigl(\eta-\beta_{\max}\bigr)
\sum_{t=0}^{T-1}\mathbb{E}\bigl[\|\nabla f(x_t)\|^2\bigr]
\le
\mathbb{E}[f(x_0)-f(x_T)]
+
T\eta^2G^2\left(\frac14+2L\right).
\]
Since
\[
\frac14+2L=\frac{1+8L}{4},
\]
we obtain
\[
\bigl(\eta-\beta_{\max}\bigr)
\sum_{t=0}^{T-1}\mathbb{E}\bigl[\|\nabla f(x_t)\|^2\bigr]
\le
\mathbb{E}[f(x_0)-f(x_T)]
+
\frac{1+8L}{4}\,T\eta^2G^2.
\]

Using \(\beta_{\max}=c\eta\), we have
\[
\eta-\beta_{\max}=\eta(1-c).
\]
Therefore,
\[
\sum_{t=0}^{T-1}\mathbb{E}\bigl[\|\nabla f(x_t)\|^2\bigr]
\le
\frac{1}{\eta(1-c)}
\left(
\mathbb{E}[f(x_0)-f(x_T)]
+
\frac{1+8L}{4}\,T\eta^2G^2
\right).
\]
Dividing by \(T\) and using \(\eta=1/\sqrt T\),
\[
\frac{1}{T}\sum_{t=0}^{T-1}\mathbb{E}\bigl[\|\nabla f(x_t)\|^2\bigr]
\le
\frac{1}{\sqrt T(1-c)}
\left(
\mathbb{E}[f(x_0)-f(x_T)]
+
\frac{1+8L}{4}G^2
\right).
\]
Finally,
\[
\min_{0\le t\le T-1}\mathbb{E}\bigl[\|\nabla f(x_t)\|^2\bigr]
\le
\frac{1}{T}\sum_{t=0}^{T-1}\mathbb{E}\bigl[\|\nabla f(x_t)\|^2\bigr],
\]
which proves the claim.
\end{proof}

\subsection{A fixed-cap variant.}
The main text uses a shrinking cap on \(\beta_{\max}\) to obtain convergence to the exact minimizer. If instead the cap is kept fixed, one can still obtain an \(O(1/\sqrt{T})\) convergence rate to a noise-dependent neighborhood under the following finite-gap condition.

\begin{assumption}[Finite optimal mini-batch gap]
\label{ass:finite_gap}
Let \(x^* \in \arg\min_x f(x)\), and define
\[
f_S^* := \inf_x f_S(x).
\]
Assume there exists \(\sigma_f^2 < \infty\) such that
\[
\mathbb{E}_{S}\!\left[f_S(x^*) - f_S^*\right] \le \sigma_f^2.
\]
\end{assumption}

We consider the same simplified AM-SGD-M update as in \eqref{eq:mom_for_proof}, but now with a fixed cap
\[
0 \le \beta_t \le \beta_{\max} < 1.
\]
For convenience, define
\[
\epsilon := 1-\beta_{\max} > 0.
\]

\begin{lemma}
\label{lem:fixedcap_velocity}
Under Assumption~\ref{growth}, for every \(t\),
\[
\mathbb{E}\bigl[\|d_{t+1}\|^2\bigr] \le \frac{G^2}{\epsilon}.
\]
\end{lemma}

\begin{proof}
Using the update \(d_{t+1}=\beta_t d_t+(1-\beta_t)g_t\) and Jensen's inequality,
\[
\|d_{t+1}\|^2
=
\|\beta_t d_t + (1-\beta_t) g_t\|^2
\le
\beta_t \|d_t\|^2 + (1-\beta_t)\|g_t\|^2
\le
\beta_{\max}\|d_t\|^2 + \|g_t\|^2.
\]
Taking expectation and using Assumption~\ref{growth},
\[
\mathbb{E}\bigl[\|d_{t+1}\|^2\bigr]
\le
\beta_{\max}\,\mathbb{E}\bigl[\|d_t\|^2\bigr] + G^2.
\]
Since \(d_0=0\), unrolling the recursion gives
\[
\mathbb{E}\bigl[\|d_{t+1}\|^2\bigr]
\le
G^2\sum_{k=0}^{t}\beta_{\max}^k
\le
\frac{G^2}{1-\beta_{\max}}
=
\frac{G^2}{\epsilon}.
\]
\end{proof}

\begin{lemma}
\label{lem:fixedcap_cross}
Assume each \(f_S\) is convex, Assumption~\ref{growth} holds, and Assumption~\ref{ass:finite_gap} holds. Then, for every \(t\),
\[
\mathbb{E}\bigl[-\beta_t \langle x_t-x^*,\, d_t\rangle\bigr]
\le
\frac{\eta G^2}{\epsilon^2} + \frac{\sigma_f^2}{\epsilon}.
\]
\end{lemma}

\begin{proof}
Set
\[
A_t := -\langle x_t-x^*,\, d_t\rangle.
\]
Using \(x_t=x_{t-1}-\eta d_t\), we write
\begin{align*}
A_t
&=
-\langle x_t-x^*,\,d_t\rangle \\
&=
-\langle x_t-x_{t-1},\,d_t\rangle
-\langle x_{t-1}-x^*,\,d_t\rangle \\
&=
\eta\|d_t\|^2
-\langle x_{t-1}-x^*,\,\beta_{t-1}d_{t-1}+(1-\beta_{t-1})g_{t-1}\rangle \\
&=
\eta\|d_t\|^2
+\beta_{t-1}A_{t-1}
-(1-\beta_{t-1})\langle x_{t-1}-x^*,\,g_{t-1}\rangle.
\end{align*}
By convexity of \(f_{S_{t-1}}\),
\[
\langle x_{t-1}-x^*,\,g_{t-1}\rangle
\ge
f_{S_{t-1}}(x_{t-1})-f_{S_{t-1}}(x^*)
\ge
f_{S_{t-1}}^*-f_{S_{t-1}}(x^*).
\]
Hence
\[
-(1-\beta_{t-1})\langle x_{t-1}-x^*,\,g_{t-1}\rangle
\le
(1-\beta_{t-1})\bigl(f_{S_{t-1}}(x^*)-f_{S_{t-1}}^*\bigr)
\le
f_{S_{t-1}}(x^*)-f_{S_{t-1}}^*.
\]
Therefore
\[
A_t
\le
\eta\|d_t\|^2 + \beta_{t-1}A_{t-1} + \bigl(f_{S_{t-1}}(x^*)-f_{S_{t-1}}^*\bigr).
\]
Since \(A_0=0\) (because \(d_0=0\)), unrolling the recursion gives
\[
A_t
\le
\sum_{j=1}^{t}
\left(\prod_{k=j}^{t-1}\beta_k\right)
\left[
\eta\|d_j\|^2 + f_{S_{j-1}}(x^*)-f_{S_{j-1}}^*
\right].
\]
Using \(\beta_k\le \beta_{\max}=1-\epsilon\), we obtain
\[
A_t
\le
\sum_{j=1}^{t}
(1-\epsilon)^{\,t-j}
\left[
\eta\|d_j\|^2 + f_{S_{j-1}}(x^*)-f_{S_{j-1}}^*
\right].
\]
Since \(0\le \beta_t\le 1\), we also have \(\beta_t A_t \le A_t\), and thus
\[
-\beta_t\langle x_t-x^*,\,d_t\rangle
\le
\sum_{j=1}^{t}
(1-\epsilon)^{\,t-j}
\left[
\eta\|d_j\|^2 + f_{S_{j-1}}(x^*)-f_{S_{j-1}}^*
\right].
\]
Taking expectation and using Lemma~\ref{lem:fixedcap_velocity} together with Assumption~\ref{ass:finite_gap},
\begin{align*}
\mathbb{E}\bigl[-\beta_t\langle x_t-x^*,\,d_t\rangle\bigr]
&\le
\sum_{j=1}^{t}
(1-\epsilon)^{\,t-j}
\left[
\eta\frac{G^2}{\epsilon} + \sigma_f^2
\right] \\
&\le
\frac{1}{\epsilon}
\left(
\eta\frac{G^2}{\epsilon} + \sigma_f^2
\right)
=
\frac{\eta G^2}{\epsilon^2} + \frac{\sigma_f^2}{\epsilon}.
\end{align*}
\end{proof}

\begin{theorem}[Convex convergence under a fixed cap]
\label{thm:fixed_cap_convex}
Assume each \(f_S\) is convex, Assumption~\ref{growth} holds, Assumption~\ref{ass:finite_gap} holds, and \(\mathbb{E}[g_t\mid x_t]=\nabla f(x_t)\). Consider the iterates generated by \eqref{eq:mom_for_proof} with a fixed cap \(0\le \beta_t\le \beta_{\max}<1\), and let \(\epsilon:=1-\beta_{\max}\). Define
\[
\bar x_T := \frac{1}{T}\sum_{t=0}^{T-1} x_t.
\]
Then
\[
\mathbb{E}\bigl[f(\bar x_T)-f(x^*)\bigr]
\le
\frac{\|x_0-x^*\|^2}{2\eta\epsilon T}
+
\eta G^2\,\frac{2+\epsilon}{2\epsilon^3}
+
\sigma_f^2\,\frac{1+\epsilon-\epsilon^2}{\epsilon^2}.
\]
In particular, with \(\eta=T^{-1/2}\),
\[
\mathbb{E}\bigl[f(\bar x_T)-f(x^*)\bigr]
\le
\frac{\|x_0-x^*\|^2}{2\epsilon\sqrt{T}}
+
\frac{G^2(2+\epsilon)}{2\epsilon^3\sqrt{T}}
+
\sigma_f^2\,\frac{1+\epsilon-\epsilon^2}{\epsilon^2},
\]
that is, the method approaches a \(\sigma_f^2\)-dependent neighborhood at rate \(O(1/\sqrt{T})\).
\end{theorem}

\begin{proof}
Let
\[
\Delta_t := \|x_t-x^*\|^2.
\]
Using the update \(x_{t+1}=x_t-\eta d_{t+1}\),
\[
\Delta_{t+1}
=
\Delta_t
-2\eta\langle x_t-x^*,\,d_{t+1}\rangle
+\eta^2\|d_{t+1}\|^2.
\]
Expanding \(d_{t+1}=\beta_t d_t+(1-\beta_t)g_t\),
\[
\Delta_{t+1}
=
\Delta_t
-2\eta\beta_t\langle x_t-x^*,\,d_t\rangle
-2\eta(1-\beta_t)\langle x_t-x^*,\,g_t\rangle
+\eta^2\|d_{t+1}\|^2.
\]
By convexity of \(f_{S_t}\),
\[
\langle x_t-x^*,\,g_t\rangle
\ge
f_{S_t}(x_t)-f_{S_t}(x^*).
\]
Therefore
\[
\Delta_{t+1}
\le
\Delta_t
-2\eta\beta_t\langle x_t-x^*,\,d_t\rangle
-2\eta(1-\beta_t)\bigl(f_{S_t}(x_t)-f_{S_t}(x^*)\bigr)
+\eta^2\|d_{t+1}\|^2.
\]

Now write
\[
f_{S_t}(x_t)-f_{S_t}(x^*)
=
\bigl(f_{S_t}(x_t)-f_{S_t}^*\bigr)
-
\bigl(f_{S_t}(x^*)-f_{S_t}^*\bigr).
\]
Since \(f_{S_t}(x_t)-f_{S_t}^*\ge 0\) and \(1-\beta_t\ge \epsilon\),
\[
-(1-\beta_t)\bigl(f_{S_t}(x_t)-f_{S_t}(x^*)\bigr)
\le
-\epsilon\bigl(f_{S_t}(x_t)-f_{S_t}(x^*)\bigr)
+
(1-\epsilon)\bigl(f_{S_t}(x^*)-f_{S_t}^*\bigr).
\]
Substituting this bound yields
\[
\Delta_{t+1}
\le
\Delta_t
-2\eta\epsilon\bigl(f_{S_t}(x_t)-f_{S_t}(x^*)\bigr)
+2\eta(1-\epsilon)\bigl(f_{S_t}(x^*)-f_{S_t}^*\bigr)
-2\eta\beta_t\langle x_t-x^*,\,d_t\rangle
+\eta^2\|d_{t+1}\|^2.
\]

Taking expectation and using \(\mathbb{E}[g_t\mid x_t]=\nabla f(x_t)\),
\[
\mathbb{E}\bigl[f_{S_t}(x_t)-f_{S_t}(x^*)\bigr]
=
\mathbb{E}\bigl[f(x_t)-f(x^*)\bigr].
\]
Applying Assumption~\ref{ass:finite_gap}, Lemma~\ref{lem:fixedcap_cross}, and Lemma~\ref{lem:fixedcap_velocity}, we get
\[
\mathbb{E}[\Delta_{t+1}]
\le
\mathbb{E}[\Delta_t]
-2\eta\epsilon\,\mathbb{E}\bigl[f(x_t)-f(x^*)\bigr]
+2\eta(1-\epsilon)\sigma_f^2
+2\eta\left(\frac{\eta G^2}{\epsilon^2}+\frac{\sigma_f^2}{\epsilon}\right)
+\eta^2\frac{G^2}{\epsilon}.
\]
Rearranging,
\[
2\eta\epsilon\,\mathbb{E}\bigl[f(x_t)-f(x^*)\bigr]
\le
\mathbb{E}[\Delta_t]-\mathbb{E}[\Delta_{t+1}]
+
\eta^2G^2\left(\frac{2}{\epsilon^2}+\frac{1}{\epsilon}\right)
+
2\eta\sigma_f^2\left((1-\epsilon)+\frac{1}{\epsilon}\right).
\]
Summing from \(t=0\) to \(T-1\) and telescoping,
\[
\sum_{t=0}^{T-1}\mathbb{E}\bigl[f(x_t)-f(x^*)\bigr]
\le
\frac{\|x_0-x^*\|^2}{2\eta\epsilon}
+
T\left[
\eta G^2\,\frac{2+\epsilon}{2\epsilon^3}
+
\sigma_f^2\,\frac{1+\epsilon-\epsilon^2}{\epsilon^2}
\right].
\]
Finally, by convexity of \(f\) and Jensen's inequality,
\[
\mathbb{E}\bigl[f(\bar x_T)-f(x^*)\bigr]
\le
\frac{1}{T}\sum_{t=0}^{T-1}\mathbb{E}\bigl[f(x_t)-f(x^*)\bigr],
\]
which proves the claim.
\end{proof}

The fixed-cap result differs from the main theorem in that a nonvanishing cap \(\beta_{\max}<1\) leaves a residual stochastic bias in the adaptive memory term, which appears through the mini-batch mismatch quantity \(\sigma_f^2\). As a result, the method approaches the solution set at rate \(O(1/\sqrt{T})\), but only up to a noise-dependent neighborhood whose radius is controlled by \(\sigma_f^2\). This is analogous to the usual behaviour of stochastic methods under persistent gradient noise: with a fixed amount of memory, one retains a nonzero bias floor unless the noise itself vanishes. In the common interpolating regime, however, \(x^*\) minimizes each mini-batch loss as well, so \(f_S(x^*)=f_S^*\) almost surely and hence \(\sigma_f^2=0\). In that case the neighborhood term disappears, and the bound reduces to exact \(O(1/\sqrt{T})\) convergence.

\newpage
\section{ALGORITHM DETAILS}
\label{sec:algo}
\vspace{-1em}
\begin{algorithm}[h]
\caption{AdamW: Adam with Decoupled Weight Decay}
\begin{algorithmic}[1]
\STATE \textbf{Initialize:} Learning rate $\eta$, weight decay $\mu$, $\beta_1, \beta_2 \in [0,1)$, $\epsilon>0$ 
\STATE Initialize parameters $x_0$, first moment $d_0=0$, second moment $v_0=0$
\FOR{each iteration $t=1,2,\dots,T$}
    \STATE Compute gradient: $g_t = \nabla f(x_{t})$
    \STATE Update biased first moment estimate: $m_t = \beta_1 m_{t-1} + (1 - \beta_1) g_t$
    \STATE Update biased second moment estimate: $v_t = \beta_2 v_{t-1} + (1 - \beta_2) g_t^2$
    \STATE Compute bias-corrected first moment: $\hat{m}_t = \frac{m_t}{1 - \beta_1^t}$
    \STATE Compute bias-corrected second moment: $\hat{v}_t = \frac{v_t}{1 - \beta_2^t}$
    \STATE Compute update direction: $\hat{d}_t = \frac{\hat{m}_t}{\sqrt{\hat{v}_t} + \epsilon}$
    \STATE Apply weight decay: $x_{t+1} = x_{t} - \eta (\hat{d}_t + \mu x_t)$
\ENDFOR
\STATE \textbf{Return:} Learned parameters $x_T$
\end{algorithmic}
\label{alg:AdamW}
\end{algorithm}

\vspace{-1.5em}
\begin{algorithm}[h]
\caption{AM-AdamW: Adaptive Memory - Adam with Decoupled Weight Decay}
\begin{algorithmic}[1]
\STATE \textbf{Initialize:} Learning rate $\eta$, weight decay $\mu$, $\beta_{1\max}, \beta_2 \in [0,1)$, $\epsilon>0$, $\lambda$ 
\STATE Initialize parameters $x_0$, first moment $m_0=0$, second moment $v_0=0$
\FOR{each iteration $t=1,2,\dots,T$}
    \STATE Compute gradient: $g_t = \nabla f(x_{t})$
    \STATE Update biased second moment estimate: $v_t = \beta_2 v_{t-1} + (1 - \beta_2) g_t^2$
    \STATE Compute bias-corrected second moment: $\hat{v}_t = \frac{v_t}{1 - \beta_2^t}$

    \STATE Compute Adam Preconditioner: $P_t = (1 - \beta_{1\max}\prod_t\beta_{1t}) \text{diag} \big( \epsilon + \sqrt{\hat{v}_t} \big)$

    \STATE Compute adaptive $\beta_{1t}$: $\beta_{1t} = \text{Clip}_{[0, \beta_{1\max}]} \Bigg( \frac{\frac{(1 + \mu \eta)(\hat{f}(x_t) - f(x_t))}{\eta} - \left\langle d_t - g_t, g_t + \lambda P_t d_t \right\rangle_{P_t^{-1}(\lambda P_t + I)^{-1}}-\mu \langle x_t^\top, d_t - g_t \rangle}{\|d_t - g_t\|^2_{P_t^{-1}(\lambda P_t + I)^{-1}}}
     \Bigg)$
    
    \STATE Update first moment estimate: $d_{t+1} = (\lambda P_t + I)^{-1}\left( \left(1-\beta_{1t}\right)g_{t} +    \left(\lambda P_t + \beta_{1t}I\right)d_{t} \right)$
    \IF{Proximal Weight Decay}
    \STATE Update Parameters: $x_{t+1} =\frac{1}{1+\mu\eta}\bigg(x_t -\eta P_t^{-1}d_{t+1}\bigg)$
    \ELSIF{Decoupled Weight Decay}
    \STATE Update Parameters: $x_{t+1} =(1-\mu\eta)x_t -\eta P_t^{-1}d_{t+1}$
    \ENDIF
\ENDFOR
\STATE \textbf{Return:} Learned parameters $x_T$
\end{algorithmic}
\label{alg:AdamW2}
\end{algorithm}

\subsection{Practical Considerations}
In Adam, the preconditioner depends slightly on the value of $\beta_{1t}$ due to the bias correction term. This dependence prevents closed-form computation of $\beta_{1t}$, but because the bias correction decays exponentially, it can be safely ignored in practice by using the approximation $\prod_{i=1}^{t-1} \beta_{1i}$, which has been shown to incur no practical drawbacks. Alternatively, the correction can be conservatively bounded by multiplying with the maximum value $\beta_{1\max}$. Moreover, for standard choices of learning rate and weight decay, both coupled and decoupled weight decay behave similarly. In our experiments, we chose to multiply the bias correction factor by $\beta_{1\max}$ and to use decoupled weight decay throughout.

\subsection{Stochastic AdamW}

We again replace \( f(x_{t-1}, s_{t}) - f(x_t, s_t) \) with its first-order approximation around \( x_t \), given by: 
\begin{align*}
    f(x_{t-1}, s_{t}) - f(x_t, s_t) &\approx \nabla f(x_t, s_t)^\top (x_{t-1} - x_t) \\
    &= \eta_{t-1}g_t^T(P_{t-1}^{-1}d_t + \mu x_t)\\
    &\approx \eta_{t-1}g_t^T(P_{t}^{-1}d_t + \mu x_t)\tag{Assuming $P_{t-1}^{-1}\approx P_{t}^{-1}$ }
\end{align*}

Where to avoid storing 2 preconditioners (2 second moment estimates), we approximate $P_{t-1}^{-1}$ with $P_{t}^{-1}$.

\subsection{Per-Layer AdamW}
Our method requires computing the preconditioner twice: once to determine the value of $\beta_1$ and once again to perform the parameter update. Although this overhead is negligible in large-scale experiments, it can be eliminated, and even lead to minor performance gains, by computing a separate $\beta_1$ for each layer. In this variant, the same preconditioner can be reused within each layer: first to compute $\beta_1$, and then to update that layer’s parameters. This approach not only improves efficiency but also allows different momentum parameters to be assigned to different layers. However, our empirical observations indicate that the momentum dynamics across layers are relatively similar. A comparison of our method with and without per-layer adjustment on the LLaMA 100M experiment is provided in Table~3.

\begin{table}[h]
    \centering
    \caption{Comparison of our method on LLaMA-100M with shared vs.\ per-layer momentum parameter $\beta_1$ and with or without the $P_{t-1}^{-1}\approx P_{t}^{-1}$ approximation . In all cases the performance is similar, for computational efficiency all experiments in the main paper were conducted using per-layer $\beta_1$ and $P_{t-1}^{-1}\approx P_{t}^{-1}$ }
    \label{tab:per_layer_vs_shared}
    \begin{tabular}{lcc}
        \toprule
        \textbf{Method} & $\mathbf{P_{t-1}^{-1}}$ & $\mathbf{P_{t-1}^{-1}\approx P_{t}^{-1}}$ \\
        \midrule
        Shared $\beta_1$  & 3.811±0.088 & 3.814±0.088 \\
        Per-layer $\beta_1$  & 3.792±0.085 & 3.792±0.084 \\
        \bottomrule
    \end{tabular}
\end{table}

\subsection{Computation Overhead and Convergence Speedup}
\label{sec:computations}

We compare the computational cost and convergence speedup of \textbf{AM-AdamW} relative to the standard AdamW optimizer across models of varying sizes. Table~\ref{tab:am_adamw_overhead} shows that the adaptive momentum mechanism introduces negligible overhead, below $0.5\%$ even for small models, while consistently accelerating convergence by up to $1.5\times$ for larger models.

\begin{table}[h]
\centering
\footnotesize
\caption{Average per-iteration runtime and relative convergence speedup of AM-AdamW compared to AdamW. Overhead is the relative increase in iteration time; speedup is the ratio of steps to reach the same validation loss.}
\setlength{\tabcolsep}{5.5pt}   % tighter columns
\renewcommand{\arraystretch}{1.25} % taller rows (incl. header)
\begin{tabular}{@{}lcccc@{}}
\toprule
\textbf{Model Size} &
\shortstack[c]{\textbf{Avg. Time per}\\ \textbf{Iteration (AdamW)}} &
\shortstack[c]{\textbf{Avg. Time per}\\ \textbf{Iteration (AM-AdamW)}} &
\shortstack[c]{\textbf{AM-AdamW}\\ \textbf{Overhead (\%)}} &
\shortstack[c]{\textbf{Convergence Speedup}\\ \textbf{ (AM-AdamW/AdamW)}} \\ \midrule
20M  & 1.21s  & 1.21s  & 0.31\% & 1.11$\times$ \\
60M  & 2.77s  & 2.78s  & 0.36\% & 1.41$\times$ \\
100M & 6.23s  & 6.26s  & 0.48\% & 1.45$\times$ \\
350M & 13.91s & 13.95s & 0.29\% & 1.52$\times$ \\
1B   & 78.89s & 78.98s & 0.11\% & 1.54$\times$ \\ \bottomrule
\end{tabular}
\label{tab:am_adamw_overhead}
\end{table}

\section{EXPERIMENTAL SETUP FOR SECTION~\ref{experiments}}
\label{sec:setup_experiment}

\subsection{Convex problems}
For the binary classification tasks, the learning rate was chosen based on an estimate of the smoothness of the feature matrix. The same procedure was applied to the multiclass classification problems. While this approach is not theoretically justified for all optimizers, it provided a consistent and practical basis for comparison. Importantly, for each dataset, the same learning rate was used across all optimizers to ensure fairness. For Adam, we fixed the learning rate to 0.001 for all experiments. We used the codebase from \href{https://github.com/konstmish/opt_methods}{https://github.com/konstmish/opt\_methods}

\subsection{Image Classification}
\paragraph{Dataset Augmentations}
\begin{itemize}
    \item \textbf{ConvNets for CIFAR10/100:} For CIFAR-10 and CIFAR-100, the training pipeline consists of random cropping to 32×32 with a padding of 4, random horizontal flipping and normalization using dataset-specific mean and standard deviation. The validation pipeline includes only normalization.
     \item \textbf{ResNet50 for ImageNet:} For ResNet-50, the training pipeline consists of random resized cropping to 224×224, random horizontal flipping and normalization using ImageNet mean and standard deviation. During validation, the transformations include resizing to 256 pixels, center cropping to 224×224, conversion to a tensor, and the same normalization applied in training.
\end{itemize}

\begin{table}[h]
\scriptsize
    \centering
    \begin{minipage}{0.48\linewidth}
        \centering
        \caption{Hyper-parameter settings for the CIFAR100 experiments.}
        \begin{tabular}{ll}
            \toprule
            \textbf{Hyper-parameter} & \textbf{Value} \\
            \midrule
            Architecture & ResNet50 and WRN-40-10 \\
            Epochs & 200 \\
            Batch size & 128 \\
            Optimizers & MGD \\
            LR schedule & cosine \\
            Weight decay & 0 \\
            Momentum for MGD & 0.9 \\
            Learning Rate & 0.1 \\
            AM-MGD $\lambda$ & 0.1 \\
            $\beta_{\max}$ & $0.9-0.1\lambda$\\
            \bottomrule
        \end{tabular}
        \label{tab:hyperparams_cifar100}
    \end{minipage}
    \hfill
    \begin{minipage}{0.48\linewidth}
        \centering
        \caption{Hyper-parameter settings for the CIFAR10 experiments.}
        \begin{tabular}{ll}
            \toprule
            \textbf{Hyper-parameter} & \textbf{Value} \\
            \midrule
            Architecture & ResNet18 and VGG-19 \\
            Epochs & 200 \\
            Batch size & 128 \\
            Optimizers & MGD \\
            LR schedule & step with r=0.1 at [100,150] \\
            Weight decay & 0 \\
            Momentum for MGD & 0.9 \\
            Learning Rate & 0.1 \\
            AM-MGD $\lambda$ & 0.1 \\
            $\beta_{\max}$ & $0.9-0.1\lambda$\\
            \bottomrule
        \end{tabular}
        \label{tab:hyperparams_cifar10}
    \end{minipage}
\end{table}

For the ablation studies, we used the same experimental settings, varying only the hyperparameter under investigation. Additionally, no learning rate schedulers were applied.

\begin{table}[h]
\scriptsize
    \centering
        \caption{Hyper-parameter settings for the ResNet18 trained on ImageNet experiments.}
        \begin{tabular}{ll}
        \toprule
        \textbf{Hyper-parameter} & \textbf{Value} \\
        \midrule
        Architecture & ResNet18 \\
        Epochs & 100 \\
        Batch size & 256 \\
        Optimizers & MGD \\
        LR schedule & step with r=0.1 at [30,60,90] \\
        Weight decay & 1e-4 \\
        Momentum for MGD & 0.9 \\
        Learning Rate & 1 \\
        AM-MGD $\lambda$ & 0.1 \\
        $\beta_{\max}$ & $0.9-0.1\lambda$\\
        \bottomrule
    \end{tabular}
\end{table}

\subsection{Pretraining Large Language Models}

\begin{table}[h]
\scriptsize
\centering
\begin{minipage}{0.48\linewidth}
    \centering
    \caption{Hyper-parameter settings for the LLaMA experiments.}
    \begin{tabular}{ll}
        \toprule
        \textbf{Hyper-parameter} & \textbf{Value} \\
        \midrule
        Learning Rate & 0.001 \\
        Weight Decay & 0.0001 \\
        Batch Size & 1024 \\
        Model Precision & BF16 \\
        Scheduler & Cosine with warm-up \\
        Warm-up Ratio & 10\% \\
        Grad Clipping & 1.0 \\
        $\beta_1$ & 0.9 \\
        $\beta_2$ & 0.999 \\
        $\epsilon$ & 1e-8 \\
        Seq-len & 1024 \\
        AM-AdamW $\lambda$ & 0.1 \\
        $\beta_{1\max}$ & $0.9 - 0.1\lambda$ \\
        \midrule
        \textbf{Eval Precision} & BF16 \\
        \textbf{Eval Seq-len} & 1024 \\
        \bottomrule
    \end{tabular}
    \label{tab:hyperparams_llm}
\end{minipage}
\hfill
\begin{minipage}{0.48\linewidth}
    \centering
    \caption{Architectural details for the LLaMA models.}
    \begin{tabular}{ll}
        \toprule
        \textbf{Component} & \textbf{Value} \\
        \midrule
        \# Parameters & 20M / 60M / 100M / 350M / 1B \\
        Hidden Size & 256 / 512 / 640 / 1024 / 2048 \\
        Intermediate Size & 688 / 1376 / 1708 / 2736 / 5461 \\
        Attention Heads & 4 / 8 / 10 / 16 / 32 \\
        Hidden Layers & 4 / 8 / 12 / 24 / 24 \\
        Activation Function & \texttt{silu} \\
        Normalization & RMSNorm ($\epsilon = 1\mathrm{e}{-6}$) \\
        Vocab Size & 32,000 \\
        Max Seq. Length & 1024 \\
        Initializer Range & 0.02 \\
        Model Type & \texttt{llama} \\
        Transformers Version & 4.28.1 \\
        \bottomrule
    \end{tabular}
    \label{tab:llama_architecture}
\end{minipage}
\end{table}

\newpage
\clearpage
\section{OMITTED EXPERIMENTAL RESULTS}

\subsection{Convex Problems}
\label{sec:conv}

\begin{figure}[h]
%\vspace{-0.1in}
\centering
\includegraphics[width=0.64\textwidth]{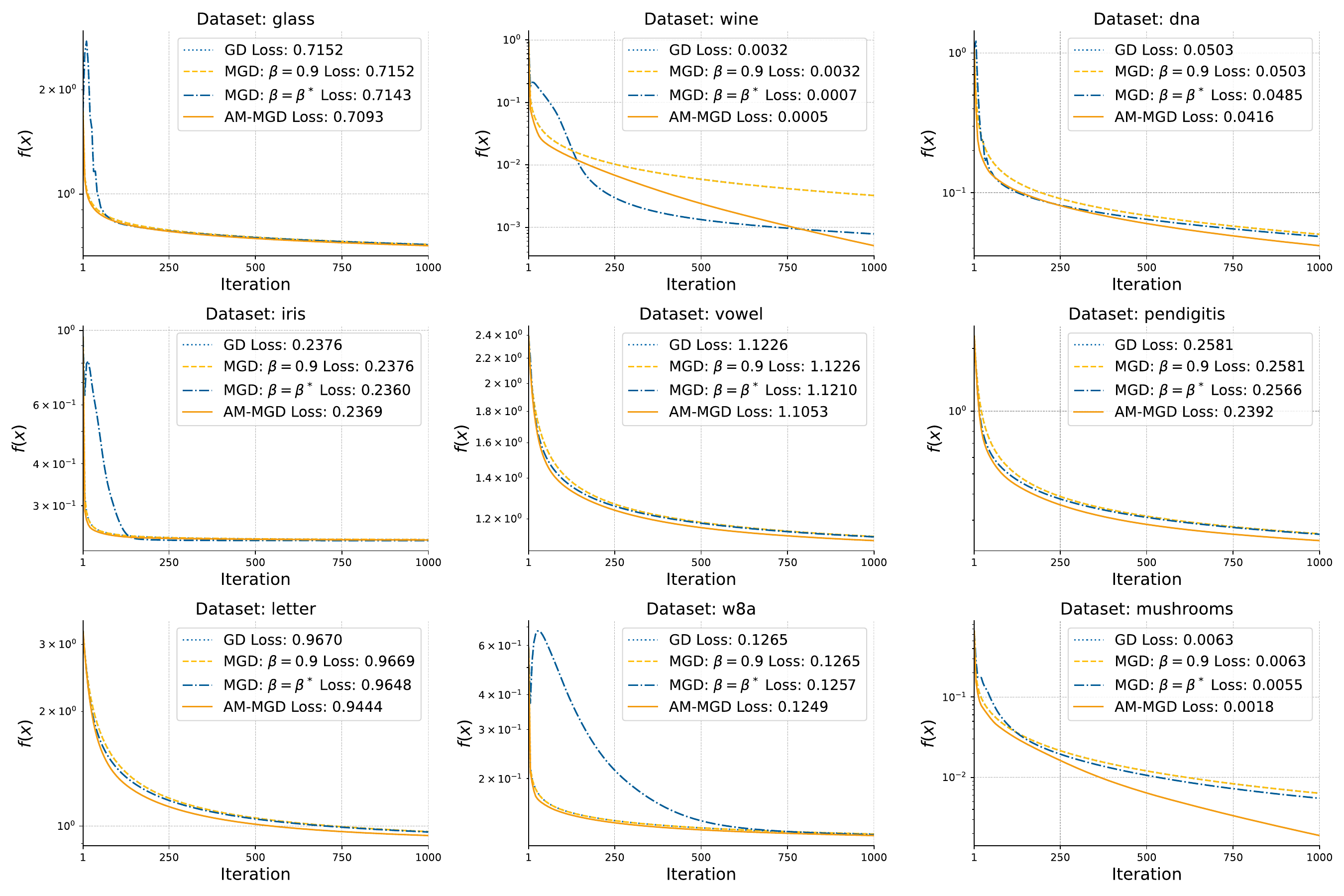} 
\caption{Logistic Loss over time, AM-MGD (red curve) outperforms fixed momentum on all but one experiments. GD and MGD with $\beta=0.9$ practically overlap. We run all algorithms for 10000 iterations with a learning rate $\eta=1/L$ where $L$ is the smoothness}
\label{fig:logreg_mgd}
\end{figure}

\begin{figure}[h]
%\vspace{-0.1in}
\centering
\includegraphics[width=0.64\textwidth]{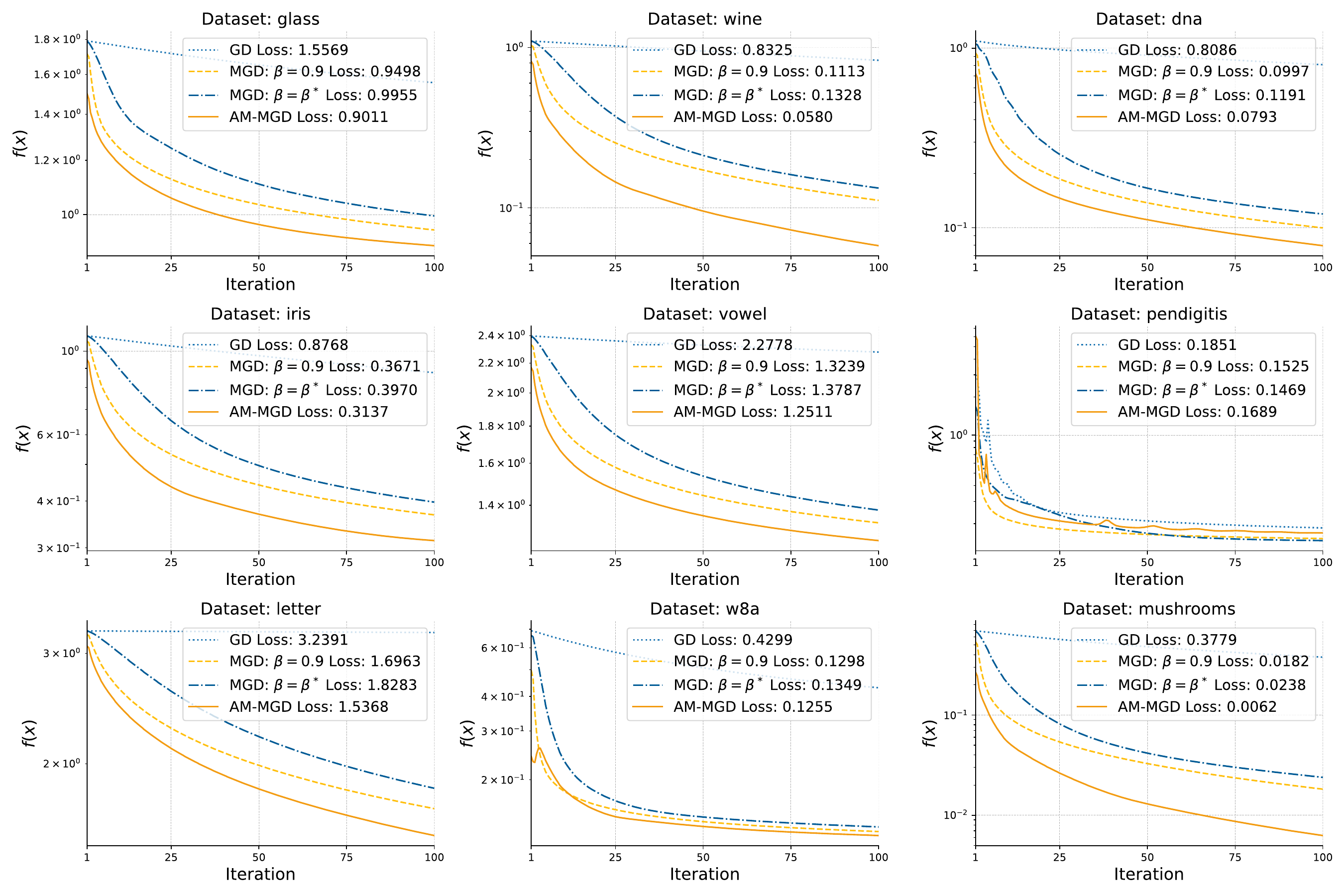} 
\caption{Logistic Loss over time, AM-Adam (purple curve) outperforms fixed momentum on all but one experiments. We run all algorithms for 1000 iterations with a learning rate of $\eta$=0.001}
\label{fig:logreg_adam}
\end{figure}

\newpage
\subsection{Detailed Curves for Figure~\ref{fig:ablations_twocol}c}
\label{sec:lr_ablation}

\begin{figure}[ht]
\centering
\includegraphics[width=\textwidth]{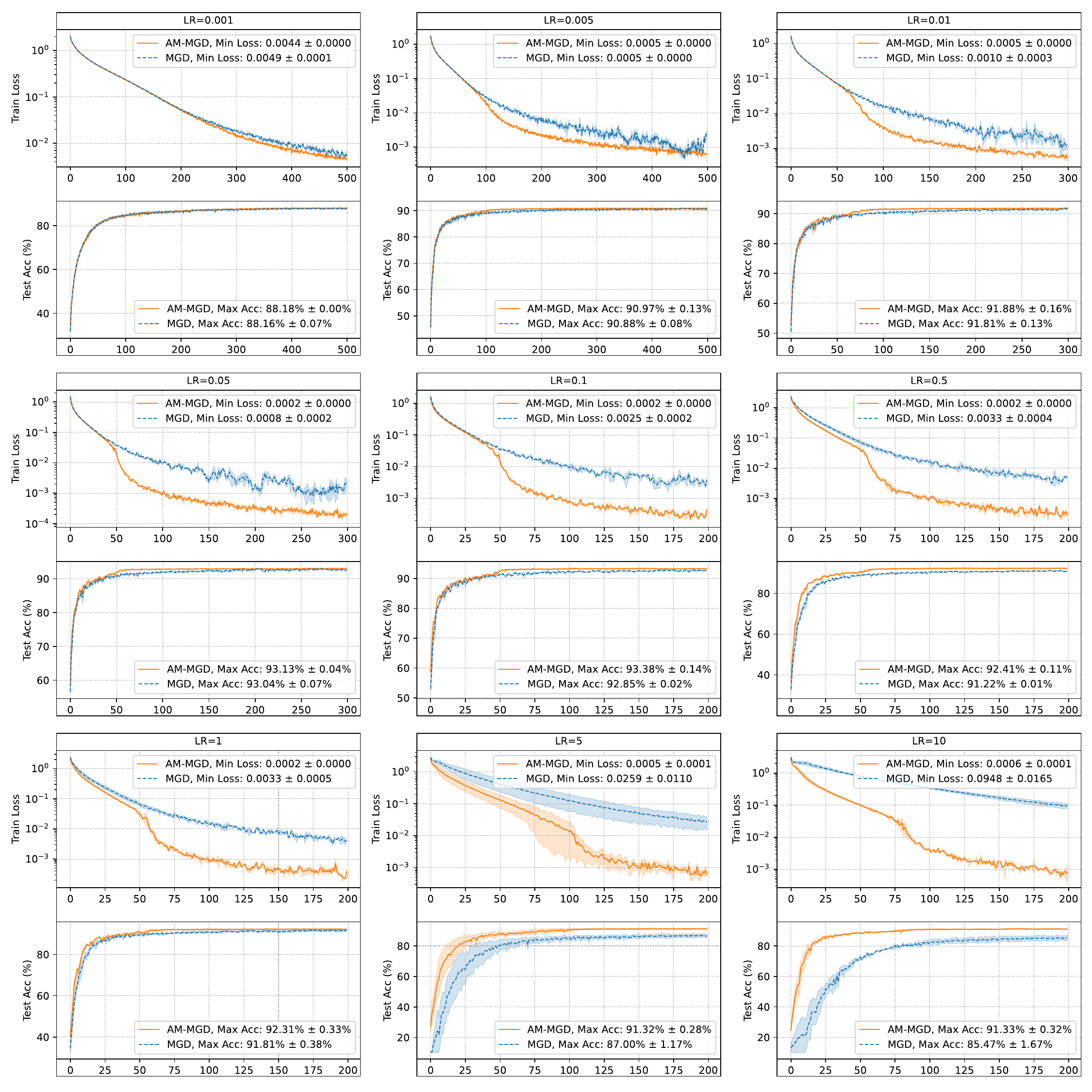}
\label{cifar10_manylr}
\caption{Training curves for different learning rates on ResNet18 trained on CIFAR10}
\end{figure}

\begin{figure}[h]
\centering
\includegraphics[width=\textwidth]{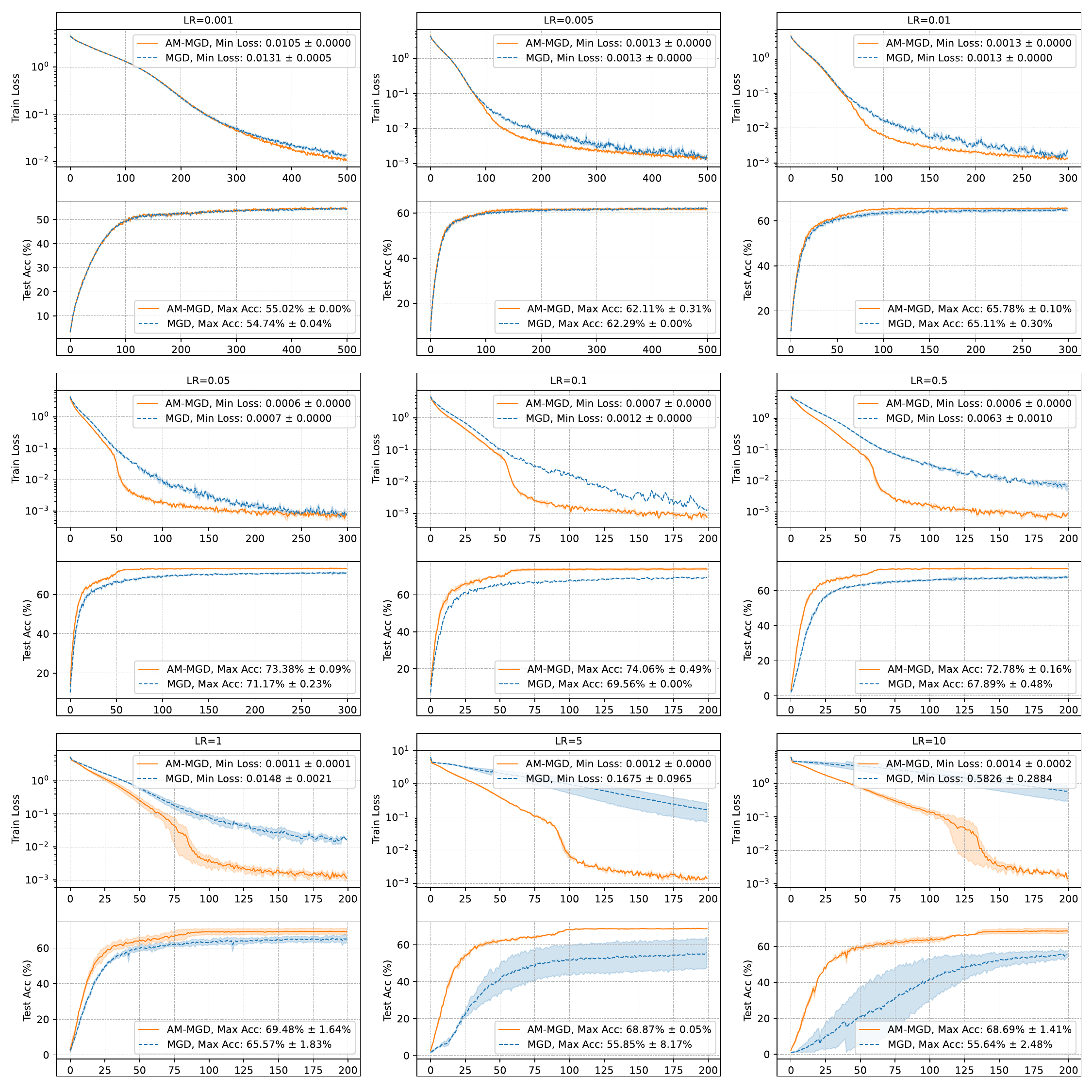} 
\label{cifar100_manylr}
\caption{Training curves for different learning rates on ResNet50 trained on CIFAR100}
\end{figure}

\clearpage
\newpage

\subsection{More plots on the behaviour of $\beta_t$}

\begin{figure}[ht]
\centering
\includegraphics[width=0.7\textwidth]{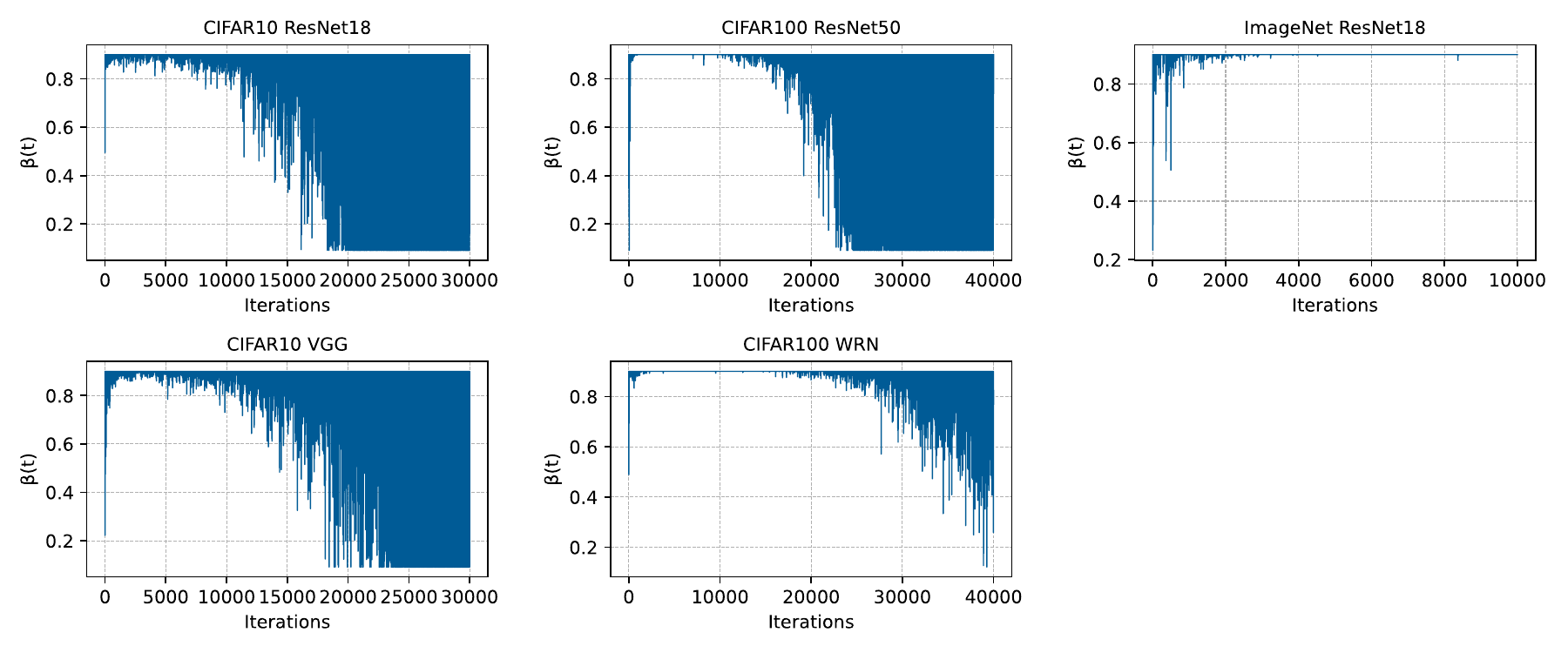} 
\caption{The adaptive $\beta_t$ across different experiments}
\end{figure}

\begin{figure}[h]
\centering
\includegraphics[width=0.7\textwidth]{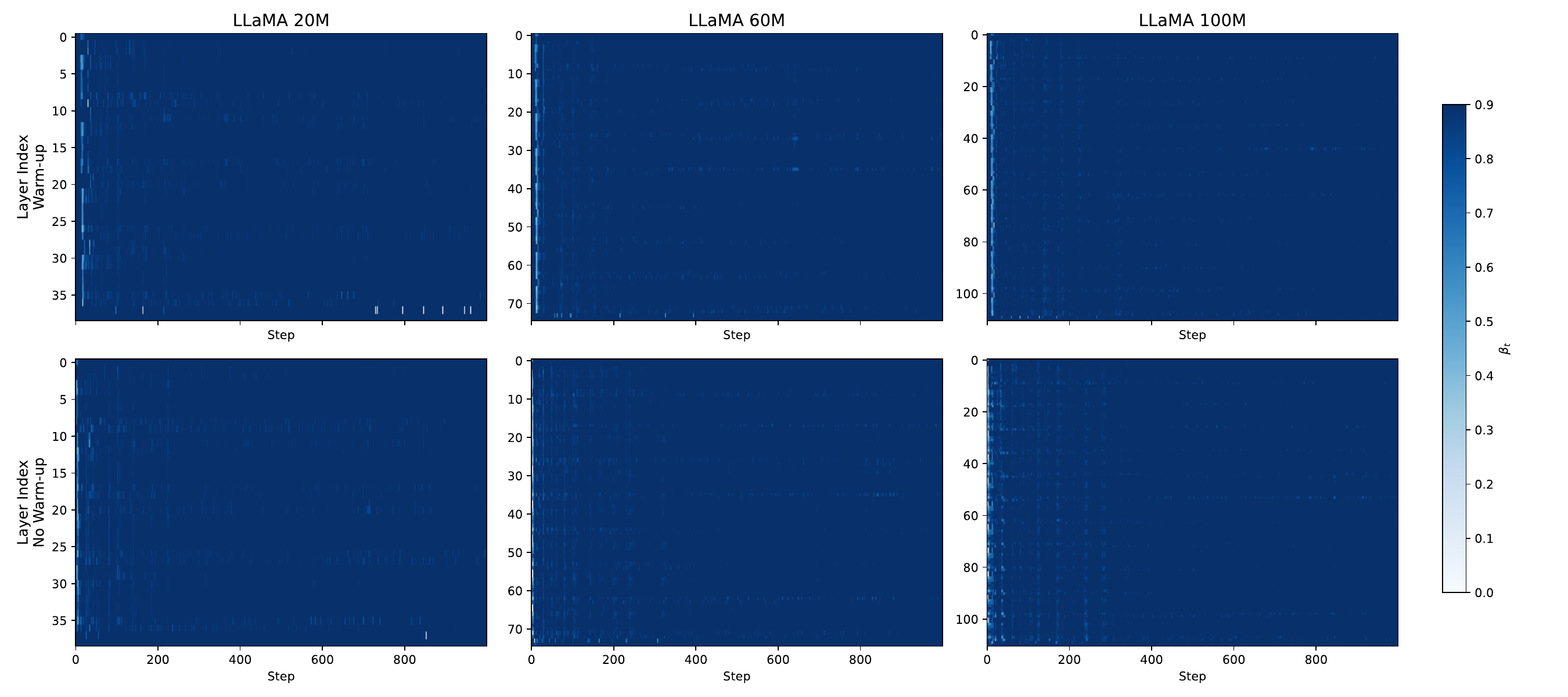} \caption{The layer-wise adaptive $\beta_t$ across different LLaMA experiments}
\end{figure}
\subsection{ImageNet Experiment}
\label{sec:imagenet}
\begin{figure}[h]
\centering
\includegraphics[width=0.8\textwidth]{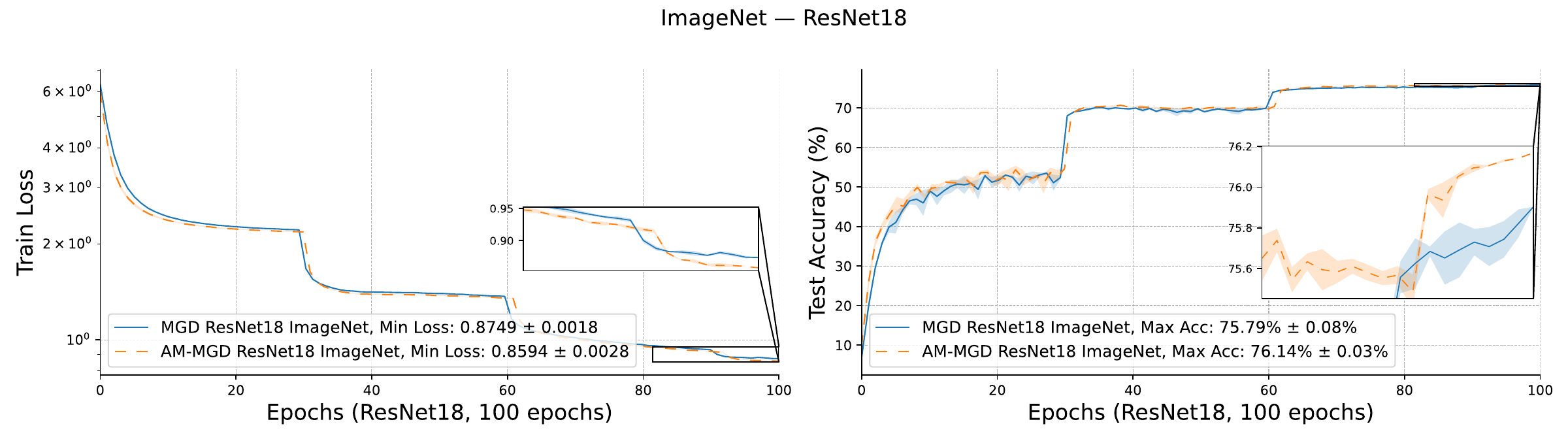} \caption{ResNet18 on ImageNet}
\end{figure}

\subsection{Comparison with other Optimizers}
We also compare our method against the Muon and Lion optimizers. We use the following publicly available implementations:
\begin{itemize}
    \item \textbf{Lion:} \href{https://github.com/huggingface/pytorch-image-models/blob/main/timm/optim/lion.py}{github.com/huggingface/pytorch-image-models}
    \item \textbf{Muon:} \href{https://github.com/KellerJordan/Muon/blob/master/muon.py}{github.com/KellerJordan/Muon}
\end{itemize}

For each optimizer, we perform a coarse hyperparameter grid search shown in Table~\ref{hs_search}. In our LLaMA experiments, we adopt the setup with warm-up enabled. For both Lion and Muon, the hyperparameter search is conducted only on the 20M model to assess the transferability of hyperparameters to larger model scales.

\begin{table}[h]
\small
    \centering
    \caption{Hyperparameter search ranges for Muon and Lion optimizers.}
    \label{tab:hyper_search_muon_lion}
    \begin{tabular}{lll}
        \toprule
        \textbf{Optimizer} & \textbf{Hyperparameter} & \textbf{Values Tested} \\
        \midrule
        Muon & Learning rate $\eta$                 & [1e-4, 5e-4, 1e-3, 5e-3, 1e-2] \\
         & Momentum coefficient $\beta$         & [0.9, 0.95, 0.99]           \\
        Lion & Learning rate $\eta$                 & [1e-4, 5e-4, 1e-3, 5e-3, 1e-2] \\
         & Second‐moment coefficient $\beta_2$  & [0.9, 0.95, 0.99, 0.999]          \\
        \bottomrule
        \label{hs_search}
    \end{tabular}
\end{table}

\begin{figure}[H]
    \centering

    \begin{minipage}{\linewidth}
        \centering
        \includegraphics[width=0.7\linewidth]{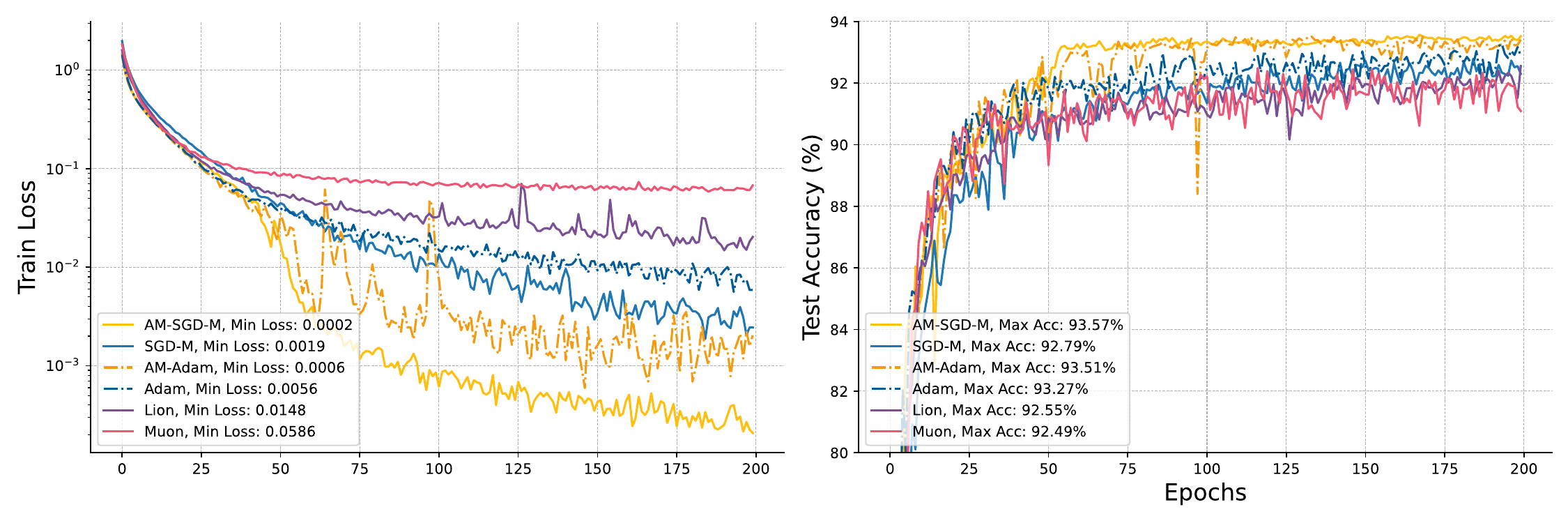}
        \caption*{(a) ResNet18 on CIFAR10}
    \end{minipage}

    \begin{minipage}{\linewidth}
        \centering
        \includegraphics[width=0.7\linewidth]{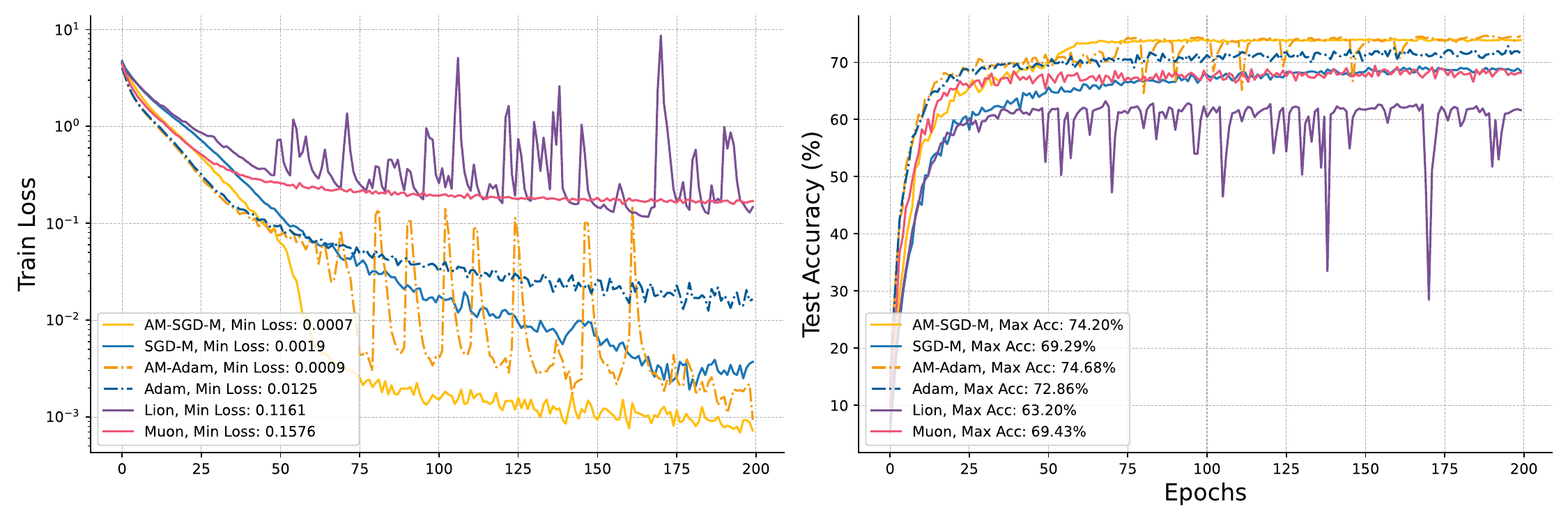}
        \caption*{(b) ResNet50 on CIFAR100}
    \end{minipage}

    \begin{minipage}{\linewidth}
        \centering
        \includegraphics[width=0.8\linewidth]{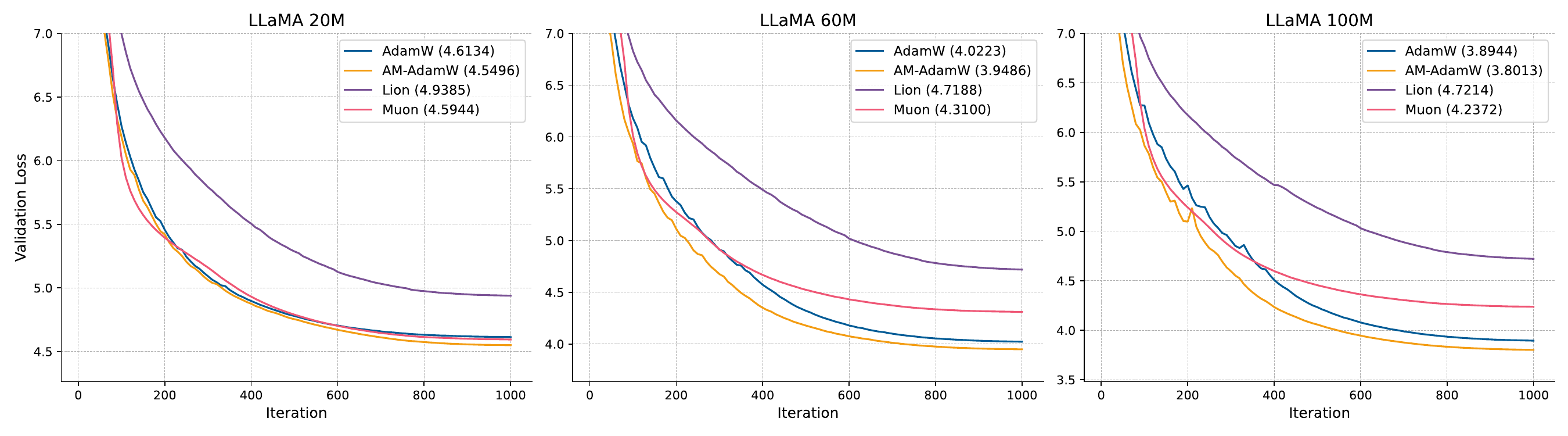}
        \caption*{(c) Three scales of LLaMA on C4}
    \end{minipage}

    \caption{Optimizer comparisons across different models and datasets.}
    \label{fig:combined_opt_comparison}
\end{figure}

\newpage
\clearpage

\end{document}